\documentclass[]{elsarticle}
\usepackage{amsthm}
\usepackage[small]{caption}
\usepackage{graphicx}

\usepackage[vlined,linesnumbered,boxed]{algorithm2e}

\newtheorem{theorem}{Theorem}
\newtheorem{lemma}[theorem]{Lemma}
\newtheorem{corollary}[theorem]{Corollary}

\begin{document}
\begin{frontmatter}

\title{Grouping and Recognition of Dot Patterns with Straight Offset Polygons}
\author{Toshiro Kubota}
\ead{kubota@susqu.edu}

\address{Department of Mathematical Sciences\\
Susquehanna University\\Selinsgrove PA 18970, USA}

\begin{abstract}
When the boundary of a familiar object is shown by a series of isolated dots, humans can often recognize the object with ease. This ability can be sustained with addition of distracting dots around the object. However, such capability has not been reproduced algorithmically on computers.
We introduce a new algorithm that groups a set of dots into multiple non-disjoint subsets. It connects the dots into a spanning tree using the proximity cue. It then applies the straight polygon transformation to an initial polygon derived from the spanning tree. The straight polygon divides the space into polygons recursively and each polygon can be viewed as grouping of a subset of the dots. The number of polygons generated is O($n$). We also introduce simple shape selection and recognition algorithms that can be applied to the grouping result. We used both natural and synthetic images to show effectiveness of these algorithms.
\end{abstract}
\begin{keyword}
Dot Patterns, Shape Extraction, Shape recognition, Clustering
\end{keyword}
\end{frontmatter}


\section{Introduction}\label{intro}
Consider a picture shown in Figure \ref{fig:dolphin_with_dots}(a). We can easily recognize a dolphin in the picture, delineated by a series of isolated dots. Reproducing this capability on a computer is not difficult for this simple case. The task quickly becomes difficult when distracting noise dots are added to the picture as shown in Figure \ref{fig:dolphin_with_dots}(b) and (c). We can still recognize the dolphin in (b) and possibly (c). However, most clustering algorithms are not capable of dealing with them as the background noise significantly overlaps with the shape. Most perceptual organization algorithms also have difficulties in dealing with such data, as dots lack any orientation and directional information needed for many algorithms\cite{elder:curve,Mahamud:PO,Wang:RatioContour}.

\begin{figure}
\centering
\includegraphics[width=3in]{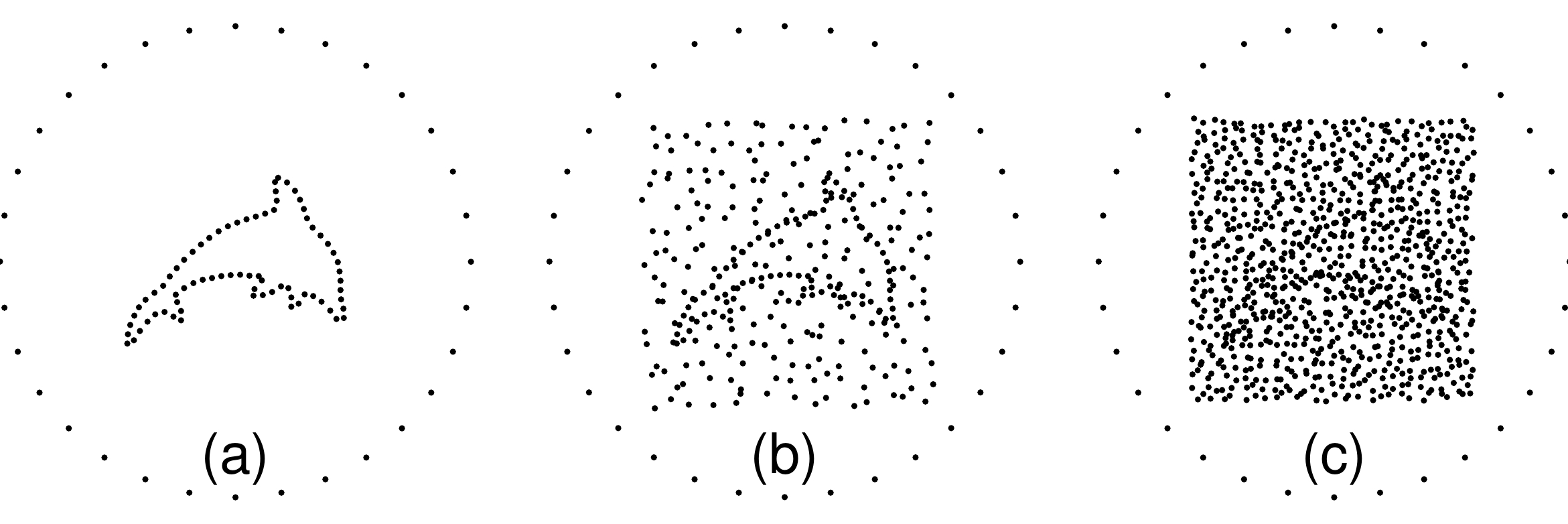}
\caption{Dot patterns} \label{fig:dolphin_with_dots}
\end{figure}

Patterns comprised of isolated dots (often called \textit{dot patterns}) have played important roles in various psycho-visual studies. The studies suggest bottom-up grouping and shape extraction capability in our visual system \cite{Kubovy:PS1995,Greene:BBF2008}. Although they are not typical patterns we encounter in every day experience, they carry raw information that is essential to our visual processing. Thus, it is important to understand how such patterns can be processed algorithmically and how an underlying salient shape can be extracted. Various attempts have been made in the past to divide a dot pattern into coherent sets (clustering problem) \cite{Zahn:IEEE1971,Rosenberg:1973,Ahuja:PAMI1982,Toussaint:PR1980} or delineate the boundary enclosing the pattern (external shape problem)\cite{Edelsbrunner:Shape1983,ORourke:1987,Chaudhuri:CVIU1997,Melkemi:PR2000}. However, the past approaches were unable to deal with overlapping distracters as seen in Figure \ref{fig:dolphin_with_dots}(b), highly dependent on critical parameters\cite{AhujaTuceryan:CVGIP1989,Edelsbrunner:Shape1983}, and limited to a single interpretation of data\cite{Rosenberg:1973}.

In this paper, we propose an algorithm that derives a collection of polygonal regions from a dot pattern in a parameter free manner. It applies minimum spanning tree on the pattern followed by straight polygon transformation of \cite{Aichholzer:skeletons}.
See Figure \ref{fig:simple_example} for an illustration of the idea. Figure \ref{fig:simple_example}(a) is a simple example of a dot pattern. Figure \ref{fig:simple_example}(b) shows a minimum spanning tree derived from (a).As in \cite{Zahn:IEEE1971}, the Euclidean distance between a pair of dots is used as the weight for the spanning tree. Figure \ref{fig:simple_example}(c) shows a straight polygon representation derived from (b). As the polygon deforms outward, a vertex of the polygon touches another part of the polygon. At the time, a new polygon is created by an enclosure of the outward growing polygon. This newly created polygon grows inward as the deformation is contained within the enclosure. Three inward growing polygons are created in the example, which are shown in thick solid lines. An inward growing polygon can further divided into multiple polygons if a concave vertex touches another side of the polygon before it vanishes. As we outline in more detail later in this paper, we can trace back vertices of a new polygon back to vertices of the spanning tree. The resulting set of vertices in the spanning tree forms another polygon using the original point set and provides a grouping instance. Therefore, for each inward growing polygon, we can associate a grouping instance of the point set.

The outward growing polygon keeps growing and encloses these inward growing polygons. We can also trace vertices of the outward growing polygon back to the spanning tree. The trace may not form a polygon as it can visit the same edge more than once (in opposite directions). For example, for the polygon pointed by an arrow in Figure \ref{fig:simple_example}(c), the trace provides two polygons joined by an edge that were traversed twice in both directions. One of the two polygons encloses the two rectangles in the upper part and the other encloses the polygon with five vertices in the lower part. Thus, a trace of outward growing polygon can potentially provides multiple grouping instances where each grouping can possibly combine multiple inward growing polygons.

\begin{figure}
\centering
\includegraphics[width=3in]{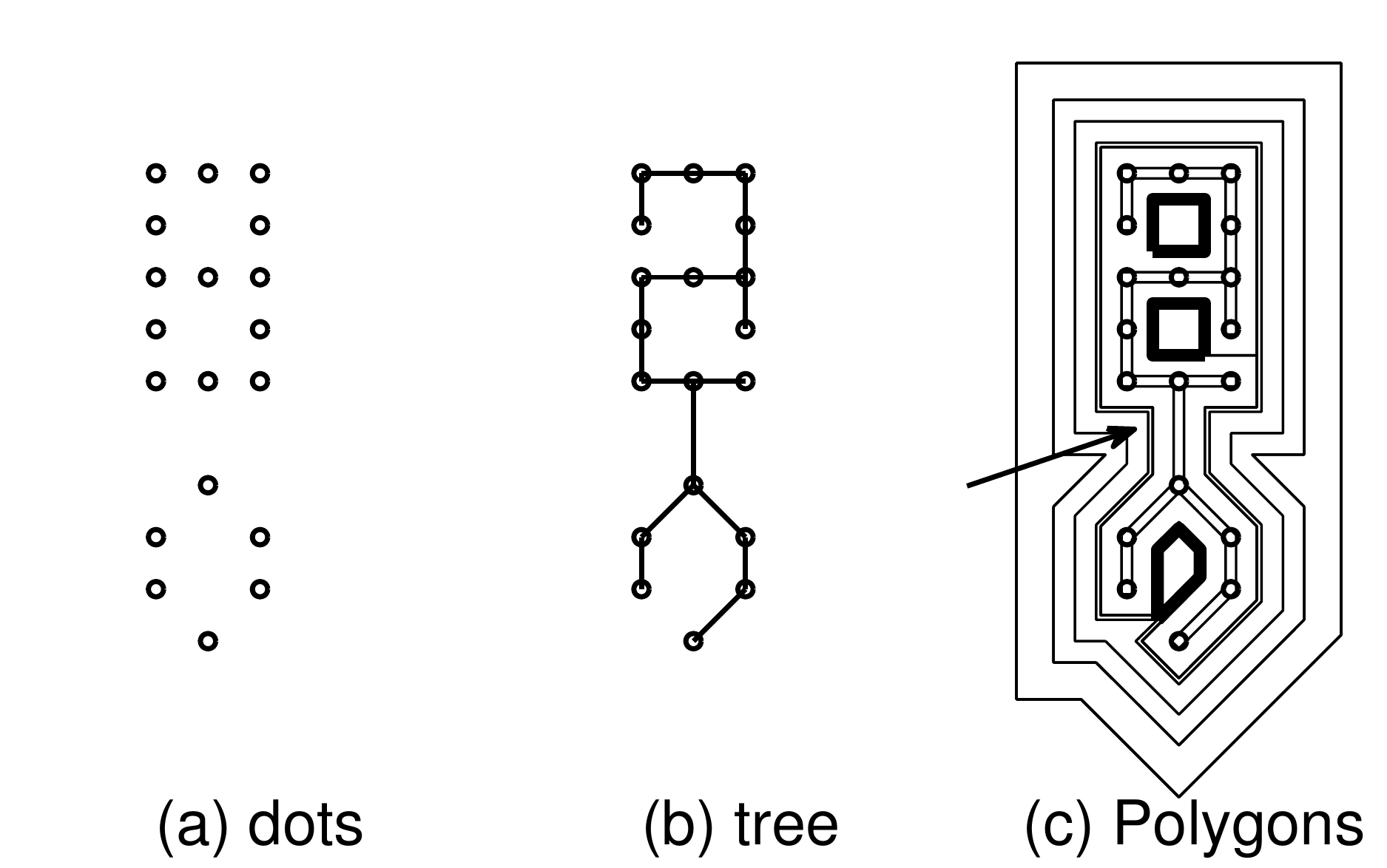}
\caption{A simple illustration of our grouping algorithm.} \label{fig:simple_example}
\end{figure}

In summary, the straight polygon transformation applied to a minimum spanning tree of a dot pattern can provide multiple grouping hypotheses in a simple deterministic manner without any parameters. Our experiments presented in this paper show that the approach offers grouping performance that is robust against noise and grouping hypotheses that are agreeable to our perception. The approach complies with multiple interpretations and multiple solutions characteristics advocated in \cite{Engbers:PAMI2003} for a good grouping algorithm.

The rest of the paper is organized as follows. Section 2 provides general overview of the past research related to this current work. Section 3 describes the straight polygon deformation and straight skeleton representation. Section 4 formalize our grouping approach as outlined above, followed by a simple approach to select salient representative polygons. Section 5 provides empirical evaluation results. Section 6 concludes the paper with a brief summary and future directions.
An earlier preliminary version of the paper appeared in \cite{Kubota:VISAPP2015}.

\section{Related works}
This section provides a brief review of past research on dot pattern perception and grouping.

Kubovy investigated grouping of dots arranged on a periodic rectilinear lattice and studied stability of the grouping based on proximity \cite{Kubovy:1994PBR}. The multi-stability of dot grouping was later modeled with an exponential decay function of the distance between nearby pairs\cite{Kubovy:PS1995}. Good fit of the model advocates proximity based groping. However, in \cite{StrotherKubovy:Curvature2006}, dots arranged on a curvilinear lattice showed grouping over smooth curves instead of over more proximal straight curves. The study appears to contradict the pure proximity based model and suggests more complex interplay between proximity and curvature cues in our perception. Nevertheless, the patterns used in \cite{StrotherKubovy:Curvature2006} are rather unnatural as dots are aligned perfectly along parallel curves providing distinct texture and they are viewed through a small aperture without any notion of boundaries. Hence, we think that the proximity cues are more dominant than the curvature ones, and stay focused on the proximity cues alone in our current study.

Greene has conducted various studies on shape recognition using a device that allows controlling of a display of a 64x64 array of LEDs at sub-millisecond accuracy. In \cite{Greene:PMS2007}, a sparse set of dots uniformly sampled around the shape induced recognition of the shape more quickly than another set distributed non-uniformly around the shape. The result suggests that the maximum separation of dots affects the speed of shape recognition. In \cite{Greene:BBF2008}, using the same LED display, dots delineating common shapes were grouped into a group of four dots and they were flashed at millisecond accuracy. One treatment selected the four dots consecutively from the outline, thus provides contour cues. The other treatment selected the four dots randomly, thus depriving any contour cues. Subjects were divided into one of the two treatments and recognition accuracy was recorded for each shape. The results showed that there was no significant difference in the two treatment groups.
The study suggests that contour attributes such as orientation, curvature, and length commonly used for perceptual grouping models are less important than the proximity cue for construction of shape outlines.

Computationally, most of perceptual organization works in recent years focus on grouping of edges. The amount of work on isolated dots is small in comparison. We attribute this unbalance to three factors. First, the importance of boundaries in our perception has been demonstrated and accepted. Edge grouping address the problem more directly than dot processing. Second, various neurophysiological studies, most notably by Hubel and Wiesel \cite{Hubel:1959}, have shown that edges or stimuli with directional and orientational attributes elicit strong responses to early stages of vision systems. Third, edges with directional and orientational cues allow more elaborate computational models than non-directional and non-orientational dots.

There are two computational issues with the dot grouping problem: representation and clustering. The former is to bestow neighborhood relations to the set of dots, and the latter is to group them based on some criteria derived from the relations. Voronoi diagram (or equivalently Delaunay triangulation) has been used for the representation in \cite{Ahuja:PAMI1982}\cite{AhujaTuceryan:CVGIP1989}\cite{Kubota:Shape2013}. Fully connected \cite{Williams:NECO97a}\cite{guy:percgrp} and K-nearest-neighbor have also been considered for providing neighbor relations to the unstructured set of dots\cite{Jarvis:1977}. Toussaint proposed a \textit{relative neighborhood graph}\cite{Toussaint:PR1980}, in which two points $p$ and $q$ are connected if the intersection of two spheres centered at $p$ and $q$, respectively, with the radius of $\|p-q\|$ contains no other points in the dot pattern. The intersection of the two sphere forms a \textit{luna}. The relative neighborhood graph is a superset of a minimum spanning tree and a subset of a Delaunay triangulated graph. Similarly, in a Gabriel graph, two points $p$ and $q$ are connected if a sphere centered at the mid point of $p$ and $q$ with the radius of $\|p-q\|/2$ contains no other points\cite{Gabriel:1969}. These two graphs tend to retain edges that are perceptually salient. However, they do not provide interpretation of how dots can be clustered into a collection of shapes. Simple elimination of edges based on proximity cues does not appear sufficient\cite{Toussaint:1988}\cite{Papari:2005}.

%

When a dot pattern represents a single cluster, the problem is to derive a polygonal representation of the cluster. The problem is often called \textit{external shape extraction}. A trivial but important representation is a convex hull. Edelsbrunner et al. proposed a generalization of convex hull called $\alpha$-\textit{hull}. Given a real number $\alpha\in(-\infty,\infty)$, the $\alpha$-\textit{hull} is the intersection of all closed generalized discs with radius $1/\alpha$ that contain all the points in the pattern \cite{Edelsbrunner:Shape1983}. If $\alpha<0$, a generalized disk is the complement of a disc of radius $-1/\alpha$ and if $\alpha=0$, it is a half-plane.  The convex hull is a case with $\alpha=0$. Furthermore, $\alpha$-\textit{shape} is a polygonal representation of the dot pattern derived from the corresponding $\alpha$-\textit{hull} and can be computed in $O(n \log n)$. Chaudhuri et al. proposed a representation called $r$-\textit{shape}, which is simpler and computationally more efficient ($O(n)$) than the $\alpha$-\textit{shape} \cite{Chaudhuri:CVIU1997}.

For the clustering side, standard clustering algorithms such as k-means, mixture of Gaussian, and ISODATA can be applied. However, these algorithms work on compact clusters that are well separated, and do not work on the dolphin example shown in Figure \ref{fig:dolphin_with_dots}. Parametric or template based models \cite{Olson:IVC1998} can be used to isolate specific shapes from background. However, the approaches are not applicable to general shapes.

Zahn used a minimum spanning tree from a dot pattern, as we do in our algorithm, and break the tree into a forrest by removing edges that are significantly longer than the others \cite{Zahn:IEEE1971}. The method of Bajcsy and Ahuja \cite{Bajcsy:PAMI1998} is similar to Zahn's method, but exploits maximum intra-cluster similarity and inter-cluster dissimilarity.
Ahuja and Tuceryan \cite{AhujaTuceryan:CVGIP1989} used the Voronoi diagram to derive the neighbor relation and various local geometric structures from the diagram to classify dots into interior, border, curve, and isolated. Globally consistent classification is encouraged by relaxation labeling\cite{Rosenfeld:relax}\cite{Hummel:Relax}. They use 7 different geometric structures and the classification method is complex and requires many free parameters.

\section{Straight offset polygons and straight axes}\label{rep}
In this section, we describe straight offset polygon and straight skeleton representations as introduced in \cite{Aichholzer:skeletons} and extended in \cite{Aichholzer:skeletons2}. Let \textit{P} be a polygon with \textit{n} vertices. The process of forming the straight offset polygon representation is to shrink the polygon by moving inward each side of the polygon by self-parallel motion. Such motion can be generated by moving each vertex to the direction of the angle bisector with the velocity given by
\begin{equation}\label{eq:velocity}
v_i\propto 1/\sin(\theta_i/2)
\end{equation}
where $v_i$ is the velocity of the $i$th vertex and $\theta_i$ is the angle of the polygon at the $i$th vertex.
Two events can change the shape and topology of the polygon: edge event and split event. An edge event occurs when two adjacent vertices collide and changes the shape of the polygon. A split event occurs when a concave (or reflexive) vertex collide with a side of the polygon and split the polygon into two.

See Figure \ref{fig:simple_offset_polygons} for an illustration of these events. It shows a polygon with 8 vertices represented by small hollow circles. The polygon undergoes shrinkage by self-parallel motion, and snapshots of the shrinkage are shown with dashed lines. First, a split event takes place when the vertex pointed by an arrow collides with a side of the polygon. The event splits the polygon into two polygons: one with 5 vertices (left polygon) and the other with 4 vertices (right polygon). These new polygons undergo the same shrinkage process independently. Each polygon experiences a number of edge events where two adjacent vertices collide. There are 3 edge events in both left and right polygons. The shrinkage stops when the polygon vanishes to a line or a point. In Figure \ref{fig:simple_offset_polygons}, polygons at split or edge events are shown in solid lines with the exact locations of events shown with filled circles.

\begin{figure}
\centering
\includegraphics[width=3in]{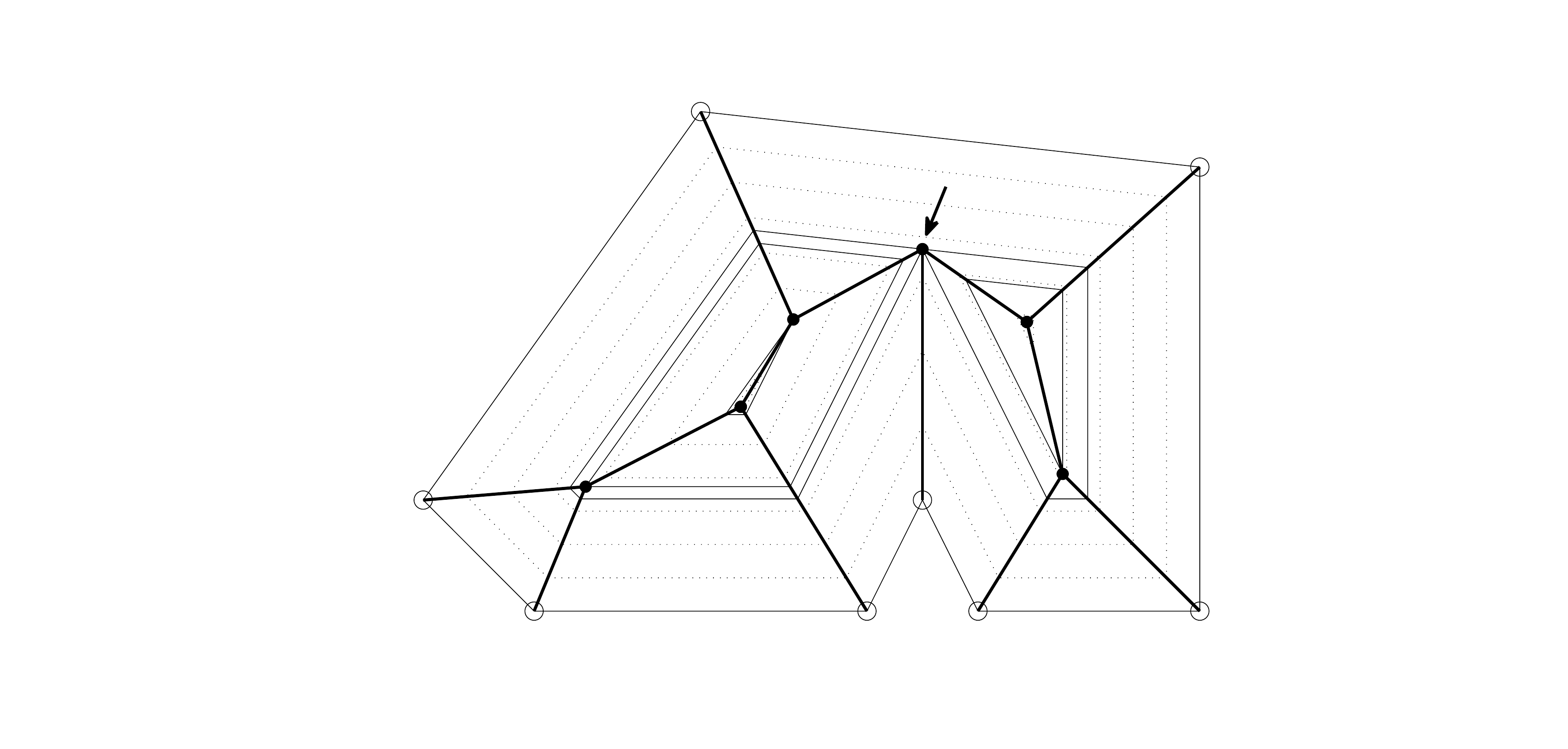}
\caption{An example of straight offset polygons.} \label{fig:simple_offset_polygons}
\end{figure}

By tracing vertices of offset polygons, we obtain a tree-structure, which is called \emph{straight skeletons} of the polygon. The skeletons are shown in thick solid lines in Figure \ref{fig:simple_offset_polygons}. Without any degenerate cases where more than two points collide simultaneously or two parallel sides collide, there are $n-2$ non-leaf nodes in the straight skeletons and $2n-3$ arcs. The skeletons also divide the original polygon into $n$ faces.

Instead of shrinkage, we can consider expansion of the polygon by self-parallel motion of each side. This can be achieved by simply reversing the motion of each vertex. The polygon deforms and its shape and topology change as edge and split events occur as in the shrinkage case. Some vertices do not vanish and approach infinity.

As the name suggests, the straight skeletons are comprised of linear line segments. In contrast, the medial axes are comprised of linear and quadratic line segments in general\cite{Blum:1967}. Thus, the data structure for the straight skeletons is simpler than that of the medial axes. However, the straight skeletons lack a dual interpretation of Voronoi diagram as in the medial axes. As a result, we need to simulate the shrinkage (or expansion) process to obtain the straight skeletons. The run-time of a brute force implementation of the simulation is in $O(n^3)$  where $n$ is the number of vertices, but in practice, it runs in $O(n^2)$. The medial axes can be constructed in $O(n\log n)$ \cite{Lee:PAMI1982}. The number of events is upper bounded by $n-2$ (Lemma \ref{lemma:numEvents}). At each event, at most two polygons are created. Thus, the number of polygons is upper bounded by $2n-4$.

Algorithm \ref{algo:offset_polygons} shows procedures that implement the construction of the straight offset polygons. The initialization (Lines 2-4) takes $O(n^2)$ since the for-loop runs over vertices, and for each vertex, we need to examine other vertices and sides for the earliest incidence of events. The recursion ($apply$) is called for each event, which is $O(n)$. Handling an event (Lines 11-17) takes a constant time. Some events in the queue need to be updated if the current event removed the participating vertex or side for the events (Lines 18-21). The number of events that need to be updated is $O(n)$ but typically is dependent on local shape and thus is $O(1)$. For each event, it takes $O(n)$ to update. Therefore, the $apply$ procedure runs in $O(n^2)$ for the worst case but $O(n)$ for typical cases. Overall, the algorithm runs in $O(n^3)$ for the worst case but $O(n^2)$ for typical cases. In this paper, we call straight offset polygons and straight skeletons combined \textit{straight polygon representation} and the process of computing the straight polygon representation \textit{straight polygon transformation}.

\begin{algorithm}[htp]
  \SetAlgoLined\DontPrintSemicolon
  \SetKwFunction{main}{OffsetPolygons}
  \SetKwFunction{rec}{apply}
  \main{\textbf{P}}{
  \KwIn{\textbf{P}: a polygon, i.e. a sequence of vertices}
  \KwOut{\textbf{G}: a set of polygons}
    \ForEach{$q$ in \textbf{P}}
    {
        find the next event for $q$\;
    }
  \textbf{G} = \rec{\textbf{P}}\;
  \KwRet \textbf{G}\;
  }{}
  \rec{\textbf{P}}{
  \KwIn{\textbf{P}: a polygon}
  \KwOut{\textbf{G}: a set of polygons}
  \textbf{G}$\leftarrow\emptyset$\;
  \If{$|\mathbf{P}|\geq 3$}
  {
    extract the next event from \textbf{P}\;
    \If{edge event}
    {
        \textbf{L} = polygon after collision\;
        \textbf{R}=$\emptyset$\;
    }
    \ElseIf{split event}
    {
        \textbf{L},\textbf{R} = polygons after split\;
    }
    \textbf{Q} = a set of vertices affected by the event\;
    \ForEach {$q$ in \textbf{Q}}
    {
        update the next event for $q$\;
    }
    $\textbf{G}\leftarrow \left\{\textbf{P}\right\}+\rec{\textbf{L}}+\rec{\textbf{R}}$\;
  }
  \KwRet \textbf{G}\;
  }
  \caption{Procedure for straight polygon transformation}\label{algo:offset_polygons}
\end{algorithm}

\section{Algorithm}\label{algo}
In this section, we first describe our algorithm of computing a collection of grouping of dots from a dot pattern. We then describe an efficient clustering algorithm that can be used to cluster the collection of polygons and reduce them into a selected few representative ones. We call the former grouping algorithm and the latter selection algorithm. We call application of the grouping algorithm followed by the selection algorithm as a figure extraction algorithm.

\subsection{Grouping}\label{grouping}
Algorithm \ref{algo:representation} shows an outline of the algorithm. First a minimum spanning tree is constructed from the input dot pattern (Line 1). As in \cite{Zahn:IEEE1971}, the Euclidean distance between a pair of dots is used as the edge weight. The tree is traced to construct a circularly linked list of polygon vertices with the velocity given by  (\ref{eq:velocity}) (Line 2). Figure \ref{fig:InitialPolygonTrace} shows examples of the trace. As shown in Figure \ref{fig:InitialPolygonTrace}(a), when there is no branch in the spanning tree (\textit{path tree}), the trace gives a polygon of $2n$ vertices where $n$ is the number of nodes in the spanning tree. As shown in Figure \ref{fig:InitialPolygonTrace}(b), when one tree node is adjacent to every other node in the tree (a \textit{star tree}), the trace gives a polygon of $3n-3$ vertices. The number of polygon vertices are bounded by these two extreme cases (Corollary \ref{corollary:numVertices}). The polygon does not need any particular width as long as all vertices are prepared with the velocity defined by (\ref{eq:velocity}). From the initial polygon, the straight polygon transformation is applied with Algorithm \ref{algo:offset_polygons} (Line 3). For each moving vertex, additional attributes are added as described next. To help describing the algorithm, let $\Phi$ be a set of vertices in the minimum spanning tree and $\Psi$ be a set of vertices in the initial polygon derived from the minimum spanning tree.

From each offset polygon, we want to trace back to a subset of vertices in $\Psi$ and in turn to $\Phi$. For the purpose, each vertex maintains two links ($\pi_a$ and $\pi_b$). For vertices in $\Psi$, $\pi_a=\pi_b=nil$. Upon an edge event, two vertices collide and a new vertex is formed. $\pi_a$ and $\pi_b$ of the new vertex are set to the colliding vertices in such a way that $\pi_b$ is the successor of $\pi_a$ in the circular linked list of the polygon. Upon a split event, a vertex collides with a side of the polygon and two new vertices are formed. For both of the new vertices, $\pi_a$ is set to the colliding vertex and $\pi_b$ is set to $nil$. Using $\pi_a$ and $\pi_b$, we can associate the sequence of vertices in an offset polygon back to a sequence of vertices in $\Psi$ (Line 5). By maintaining these assignments, the sequence of vertices after tracing back to $\Psi$ forms a polygon.

As illustrated in Section \ref{intro} with Figure \ref{fig:simple_example}, a sequence of vertices obtained in Line 5 may not form a polygon when they are traced back to $\Phi$ as some sequence can trace the same tree edge more than once in opposite directions. In the next step, we trace the sequence in $\Phi$ and decompose it into a disjoint set of subsequences where each form a separate polygon in $\Phi$ (Line 6).

Let $m\in\left[2n, 3n-3\right]$ be the number of vertices in $\Psi$. Note that offset polygons by successive edge events trace back to the same set of vertices in the initial polygon. Thus, they do not provide any new grouping hypotheses. There can be at most $m/2$ split events. This case occurs if polygons of 3 vertices are split successively without any edge events. Each provides a distinct grouping hypothesis. These split polygons can be aggregated by the outgrowing polygon. The aggregation is done hierarchically. Thus, there can be at most $m/2$ aggregation instances. Therefore, there can be at most $m$ grouping hypotheses in total.

\begin{figure}
\centering
\includegraphics[width=2.5in]{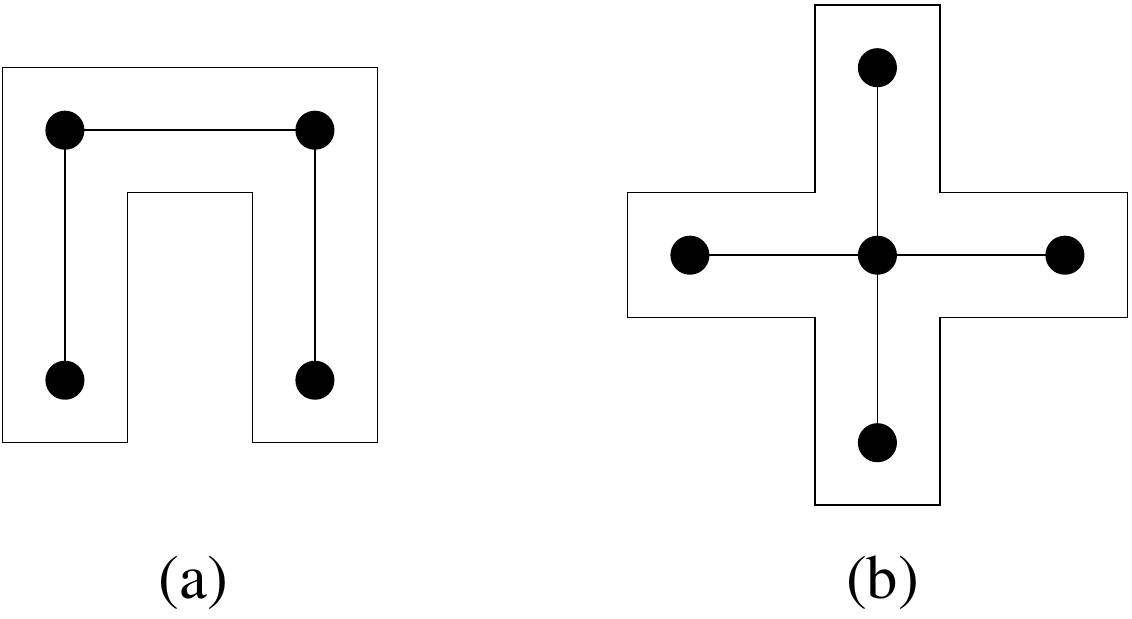}
\caption{Two examples of a spanning tree and an initial polygon.} \label{fig:InitialPolygonTrace}
\end{figure}

\begin{algorithm}[htp]
  \SetAlgoLined\DontPrintSemicolon
  \SetKwFunction{main}{Grouping}
  \main{\textbf{Z}}{
  \KwIn{\textbf{Z}: a set of isolated dots}
  \KwOut{\textbf{H}: a set of polygons}
  \textbf{T} $\leftarrow$ minimum spanning tree of \textbf{Z}\;
  \textbf{P} $\leftarrow$ a thin polygon tracing around \textbf{T}\;
  \textbf{G} $\leftarrow$ OffsetPolygons(\textbf{P})\;
  \ForEach {\textbf{g} in \textbf{G}}
  {
    \textbf{h} = trace \textbf{g} to $\Phi$\;
    $\left\{\textbf{J}\right\}\leftarrow$ polygonal regions in \textbf{h}\;
    add each element of $\left\{\textbf{J}\right\}$ to \textbf{H}\;
  }
  \KwRet{\textbf{H}}\;
  }
  \caption{Procedure for dot grouping}\label{algo:representation}
\end{algorithm}

\subsection{Selection}\label{selection}
Algorithm \ref{algo:representation} brings a collection of polygons. Some polygons are salient while others are not. There are also polygons that are perceptually similar. To integrate the algorithm into practical computer vision applications, we need a way to reduce the number of polygons objectively. In this section, we describe a way to select $K$ perceptually distinct and most salient polygons from the collection of polygons returned by Algorithm \ref{algo:representation}. To achieve the goal, we need a saliency measure of a polygon and a similarity measure of two polygons.

We propose the following for the saliency measure.
\begin{equation}\label{eq:convexity}
c(P)=\frac{\textrm{area}(P)}{\max_i \|p_i-p_{i+1}\|^2}
\end{equation}
where $P=\left\{p_1, p_2,\cdots,p_N\right\}$ is a polygon with $N$ vertices, $\textrm{area}(P)$ is the area of the polygon, and the index in the denominator runs in modulo-N so that it computes the longest polygon side squared. The measure is unit-less and favors larger $N$. When $N$ is fixed, it favors a regular polygon. Between a pair of polygons, we consider one with the larger measure more salient than the other.

We propose the following for the similarity measure between polygons $P$ and $Q$.
\begin{equation}\label{eq:overlap}
o(P,Q) = |P \cap Q|/|P \cup Q|
\end{equation}
where each polygon is treated as a set of vertices, and $|X|$ indicates the cardinality of the set $X$. The measure is relatively fast to compute but may provide a small measure even when two polygons share a large overlapping area and are perceptually similar. Such case occurs when two sets use considerably different sets of points to delineate similar regions. We could employ a measure based on the areas as we did in \cite{Kubota:VISAPP2015}. However, the area based measure is more expensive to compute than the proposed set based one, and it provided little difference to data sets we tested.

Using the overlap measure, we can cluster polygons into disjoint connected components. More specifically, we merge two polygons if their overlap measure is less than a threshold, $\eta$. We set $\eta=0.5$ in our experiment. For each component, we use the polygon with the highest saliency measure as a representative. We can then select $K$ components whose representatives have $K$ highest saliency measures.

\section{Experiments}\label{experiments}
This section presents results of the figure extraction algorithm. The number of polygons returned by the selection algorithm ($K$) is set to 10. The similarity measure threshold ($\eta$) is set to 0.5. We first present results of the shape extraction algorithm applied to dot patterns and show its effectiveness in extracting the underlying shape under noisy conditions. Next, we evaluate the performance objectively by integrating the results to a simple shape recognition procedure. This part demonstrates potential utility of our algorithms to computer vision applications. We then present figure extraction results on dot patterns derived from Canny edges. This part demonstrates applicability of the algorithms to natural images.

\subsection{Dot patterns}\label{dotPatterns}
Our first experiment is to apply the figure extraction procedure to dot patterns comprised of a shape and noise,
Figure \ref{fig:dolphin_with_dots} shows instances of such dot patterns. There are 20 shapes of animals and common objects. Each shape was generated by tracing the boundary of its binary image, keeping every 10th point while discarding the others, and scaling them so that the shape stretched around $200\leq x\leq 800$ and $200\leq y\leq 800$ in the pixel coordinate. For each shape, we imposed three levels of noise, which were generated in the following way. The average spacing between adjacent points in the shape was computed. Denote the average $\mu$. Then the area between $200\leq x\leq 800$ and $200\leq y\leq 800$ was divided into grids of $s\mu$ by $s\mu$ where $s$ controlled the noise level and had a value of 1, 1.5 or 2. Within each grid, a point was placed randomly while keeping clear of $10\%$ margin around the four border (so that no pair of noise points get too close with each other). Thus, $s=1$ gives the highest amount of noise, and $s=2$ gives the least amount of noise. Finally, 32 evenly spaced dots were placed around a large circle centered at $(500,500)$ with the radius of $490$. This circular pattern was intended to provide a frame of reference to human subjects in our psycho-visual experiment described in \cite{Kubota:VISAPP2015}.

Figure \ref{fig:ShapeDotPatterns} shows 80 dot patterns (20 shapes and 4 noise levels) used in our experiments.
\begin{figure}
\includegraphics[width=5in]{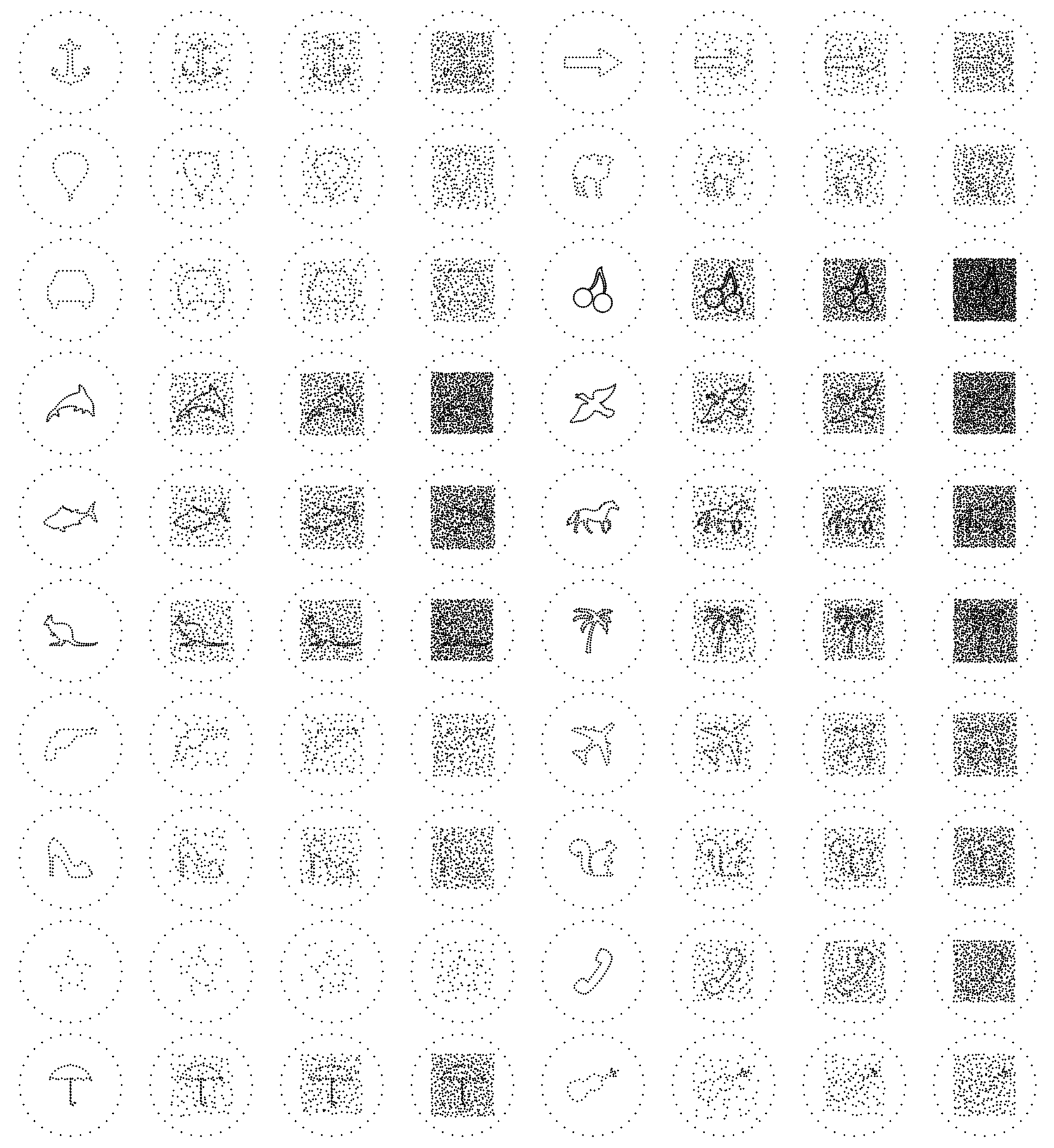}
\caption{Dot patterns of various shapes and noise levels.} \label{fig:ShapeDotPatterns}
\end{figure}

\subsubsection{Grouping}\label{grouping_result}

The figure extraction procedure is applied to each dot pattern shown in Figure \ref{fig:ShapeDotPatterns}.
Results are shown in Figure \ref{fig:ShapeDotPatternResults}. In each plot, K polygons returned by the procedure are drawn. Among them, the one that matches most closely with the underlying shape in the dot pattern is shown in black while the others are shown in gray. We use the following area based similarity measure to find the best match.
\begin{equation}\label{eq:matchingScore}
m(P, Q) = \frac{\textrm{area}(P \cap Q)}{\textrm{area}(P \cup Q)}
\end{equation}
where $P$ and $Q$ are two polygons, and unlike (\ref{eq:overlap}), $\cap$ and $\cup$ of the two polygons are taken as the area of the intersection and the area of the union, respectively. The measure takes a value in $[0,1]$ with 1 when $P$ and $Q$ match exactly and 0 when they are disjoint.

In noise-free instances, the algorithm was able to extract the underlying shape effectively, except that some thin structures (such as the tail of the kangaroo, the tail of the plane, and the stem of the umbrella) were lost. These losses are caused mainly by the saliency measure of (\ref{eq:convexity}) that favors near circular shapes, and partially by the minimum spanning tree based initialization of the polygon as it favors connecting dots across thin structures rather than along the structures. As the amount of noise increases, it becomes more difficult to extract the shapes. However, the algorithm exhibited sustained performance, which correlated well with human perception as demonstrated in \cite{Kubota:VISAPP2015}.

\begin{figure}
\includegraphics[width=5in]{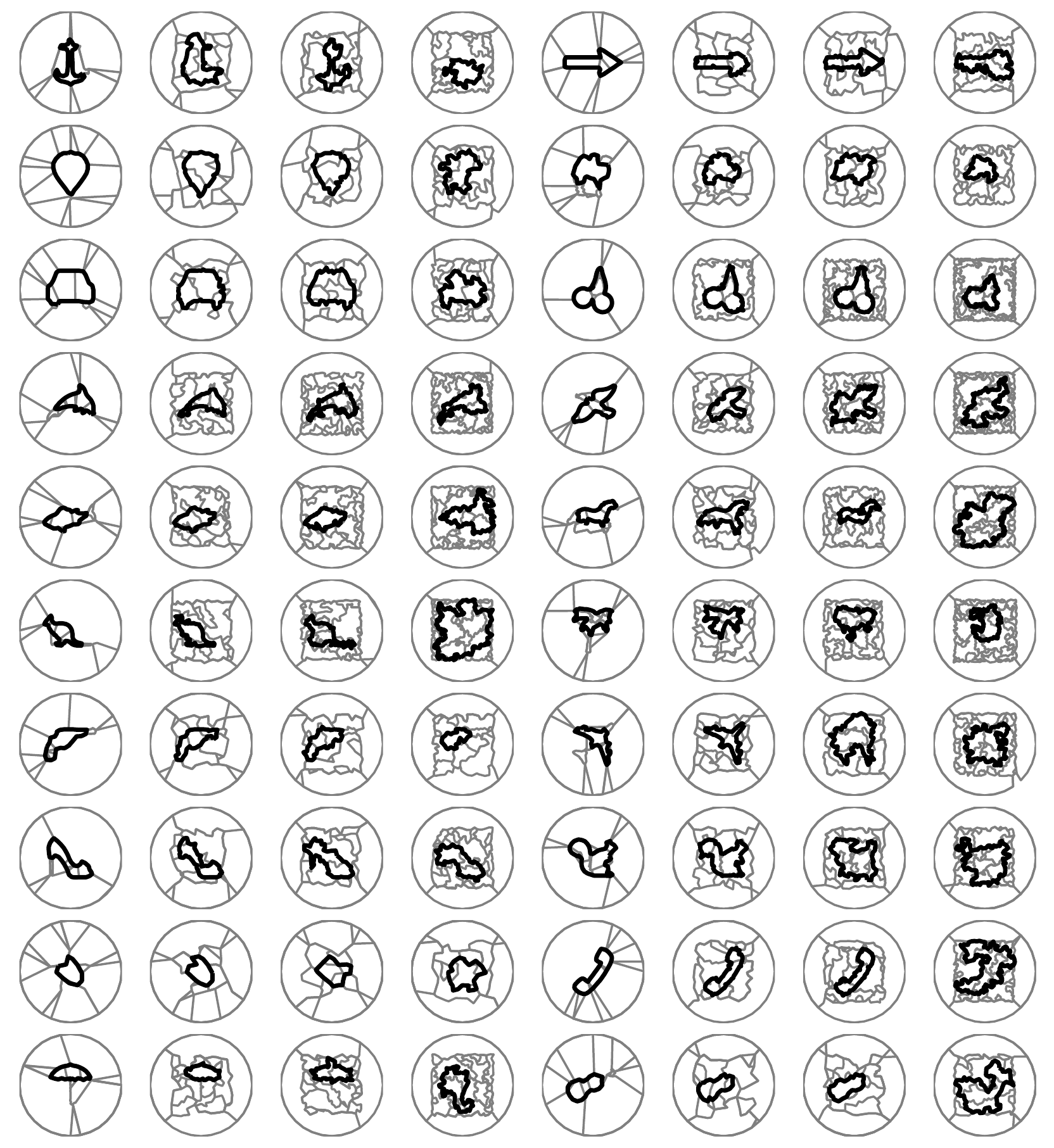}
\caption{Results of grouping applied to the dot patterns of Figure \ref{fig:ShapeDotPatterns}.
10 polygons are drawn on top of each other. The one with the closest match with the shape is shown in black.} \label{fig:ShapeDotPatternResults}
\end{figure}

\subsubsection{Retrieval}\label{retrieval_result}
In this section, we quantitatively evaluate the effectiveness of the figure extraction algorithm by applying shape recognition to the extracted polygons.  Each polygon is used as a query to retrieve the best matched shape in the database. 80 shapes shown in Figure \ref{fig:DatabaseShapes} are used to construct the database. The 20 shapes used in Section \ref{grouping_result} are also included in the database. We also add two more shapes: circle and square. The circle shape corresponds to the outer circle shown in every dot pattern of Figure \ref{fig:ShapeDotPatterns}. The square shape corresponds to the outer hull of noise dots perceived in noisy dot patterns.

We use the area based similarity measure of (\ref{eq:matchingScore}) to find the best match. Some queries return the circle or the square shape as the best match. We ignore these queries, as we are interested in extracting animals and common objects found in each dot pattern. Among remaining queries, we choose the one with the best matching score. If the best match agrees with the shape used to construct the dot pattern, we consider the retrieval successful. If the best match was one of the other 79 shapes, then we consider the retrieval unsuccessful. Note that the database contains multiple instances of the same animal/object category. For example, there are two violins, and two kangaroos. For each query, we consider only the one used to construct the dot pattern as the right match. Thus, for a dot pattern with a violin in Figure \ref{fig:ShapeDotPatterns}, we consider only the second violin in Figure \ref{fig:DatabaseShapes} (the bottom row, 7th column) as the right match.

The result of applying this retrieval procedure to the 80 dot patterns shown in Figure \ref{fig:ShapeDotPatterns} is shown in Table \ref{table:recognitionResult}. The rows are shapes and columns are noise levels. The \textbf{noise=0} column is for noise-free instances. The \textbf{noise=1,2,3} columns are for the lowest, medium, and highest noise levels, respectively. Thus, each cell in the table corresponds to a particular dot pattern in Figure \ref{fig:ShapeDotPatterns}. Each cell shows the shape returned by the retrieval procedure and its match score. The result shows sustained performance against noise until the noise level reaches the highest, at which the recognition rate drops significantly. An psychovisual experiment reported in \cite{Kubota:VISAPP2015} shows similar trend in our perception; we are not able to recognize a shape in dot patterns with the highest noise level.

There are three main causes in the retrieval errors. The first one is inaccurate extraction of the shape (either under-extraction or over-extraction). This type is seen in many noisy patterns. However, it is also seen in the noise-free version of the plane dot pattern (7th row, 5th column in Figure \ref{fig:ShapeDotPatternResults}) where the wing part of the plane is extracted and matched with the hockey player in the database. The source of this type of errors is traced to the shape extraction algorithm. The second type is accidental match where a polygon extracted by the algorithm happens to resemble the shape in the database. An example is the noise-free version of the anchor dot pattern (1st row, 1st column). A polygon that closely approximates the anchor shape was returned by the shape extraction procedure. However, another polygon happened to resemble the coffee pot shape and yielded a higher matching score. A more elaborate matching score would alleviate this type of errors.

\begin{figure}
\includegraphics[width=5in]{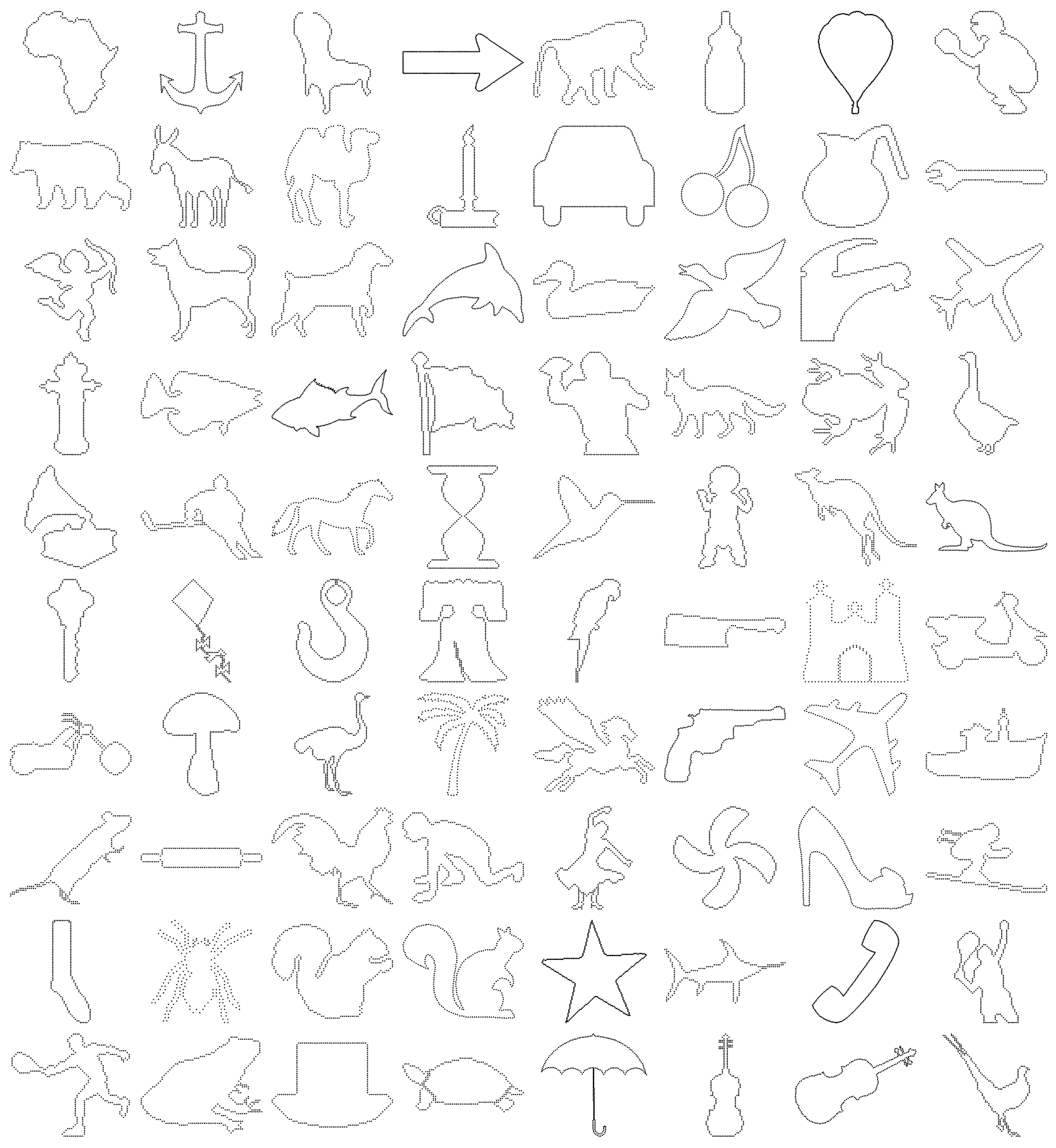}
\caption{80 Shapes in the database.} \label{fig:DatabaseShapes}
\end{figure}

\begin{table}
\centering  \caption{Recognition experiment results. Each row corresponds to the shape used in dot patterns. Each column corresponds to the noise-level with, from left to right, noiseless, low noise, medium noise, and high noise levels, respectively. In each cell, the name of the shape returned by the procedure is shown along with the matching score. Average recognition rate and matching score per noise level are shown at the bottom.}
\small
\begin{tabular}{|c|c|c|c|c|}
\hline
\textbf{shape}	&	\textbf{noise=0}	&	\textbf{noise=1}	&	\textbf{noise=2}	&	\textbf{noise=3}\\
\hline
anchor & coffee pot & coffee pot & coffee pot & top hat \\
	 & 0.725 & 0.562 & 0.512 & 0.542 \\
\hline
arrow & arrow & arrow & arrow & arrow \\
	 & 0.982 & 0.884 & 0.905 & 0.666 \\
\hline
balloon & balloon & balloon & balloon & balloon \\
	 & 0.961 & 0.945 & 0.911 & 0.664 \\
\hline
camel & camel & balloon & camel & car \\
	 & 0.731 & 0.687 & 0.739 & 0.594 \\
\hline
car & car & car & car & car \\
	 & 0.936 & 0.944 & 0.924 & 0.729 \\
\hline
cherries & cherries & cherries & cherries & toad \\
	 & 0.750 & 0.726 & 0.708 & 0.535 \\
\hline
dolphin & dolphin & dolphin & dolphin & dolphin \\
	 & 0.868 & 0.901 & 0.843 & 0.758 \\
\hline
duck & duck & duck & duck & duck \\
	 & 0.901 & 0.795 & 0.732 & 0.663 \\
\hline
fish & fish & fish & fish & turtle b \\
	 & 0.830 & 0.815 & 0.782 & 0.453 \\
\hline
horse & toad & dog b & car & mission \\
	 & 0.692 & 0.608 & 0.598 & 0.555 \\
\hline
kangaroo & kangaroo & kangaroo & kangaroo & baseball catcher \\
	 & 0.832 & 0.784 & 0.737 & 0.408 \\
\hline
palmtree & palmtree & car & car & balloon \\
	 & 0.574 & 0.589 & 0.553 & 0.596 \\
\hline
pistol & pistol & pistol & pistol & pistol \\
	 & 0.752 & 0.694 & 0.685 & 0.429 \\
\hline
plane & hockey player & car & car & top hat \\
	 & 0.446 & 0.659 & 0.596 & 0.690 \\
\hline
shoe & shoe & shoe & shoe & shoe \\
	 & 0.850 & 0.843 & 0.707 & 0.609 \\
\hline
squirrel & squirrel & squirrel & squirrel & mission \\
	 & 0.913 & 0.892 & 0.724 & 0.588 \\
\hline
star & star & bear & star & car \\
	 & 0.633 & 0.611 & 0.663 & 0.619 \\
\hline
telephone & telephone & telephone & telephone & liberty bell \\
	 & 0.982 & 0.943 & 0.938 & 0.442 \\
\hline
umbrella & umbrella & balloon & umbrella & car \\
	 & 0.558 & 0.698 & 0.625 & 0.525 \\
\hline
violin & violin & violin & violin & mission \\
	 & 0.880 & 0.850 & 0.823 & 0.492 \\
\hline
\hline
\textbf{rate}	& \textbf{0.850} & \textbf{0.650} & \textbf{0.800} & \textbf{0.350} \\
\textbf{average}	& \textbf{0.790} & \textbf{0.772} & \textbf{0.735} & \textbf{0.578} \\
\hline
\end{tabular}\label{table:recognitionResult}
\end{table}

\subsection{Edge images}
Our next experiment is to apply the shape extraction to some dot patterns derived from real images. To generate a dot pattern of an image, we first apply Canny edge detector to the image. We then sub-sample edge pixels in each $4\times 4$ block by placing a point at the centroid (rounded to the pixel) of edge pixels in the block. The subsampling reduces the proximity cue and makes the grouping process more difficult. Figure \ref{fig:preprocessing} shows an example of these pre-processing steps where (a) is a gray scale image, (b) is a Canny edge image, (c) is a dot pattern, and (d) is a minimum spanning tree derived from (c).

Figure \ref{fig:edgeResults} shows results of our algorithm applied to dot patterns derived from the Canny edge detector. In each row, there are three set of results. Each set contains three images where, from left to right, the original image, dot pattern, and shape extraction results superimposed on the dot pattern. The 5 most salient polygons (in terms of (\ref{eq:convexity})) are shown.

\begin{figure}
\centering
\includegraphics[width=4in]{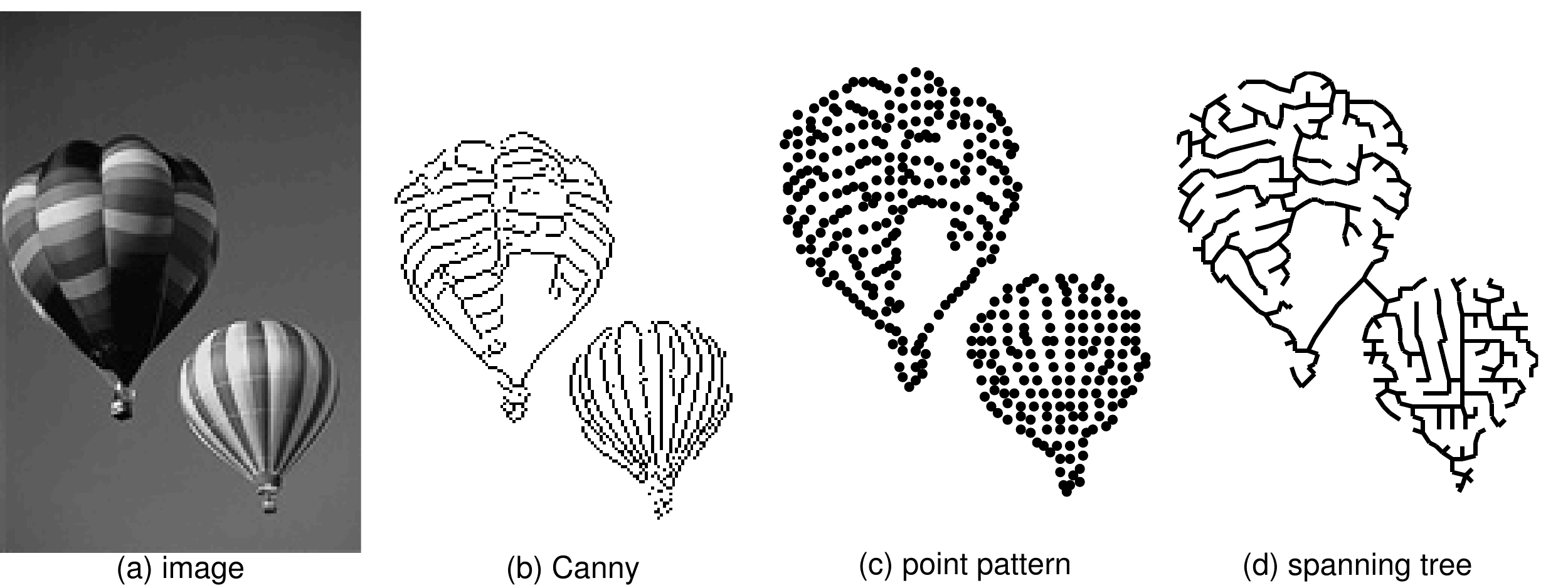}
\caption{Pre-processing steps for a natural image.} \label{fig:preprocessing}
\end{figure}

\begin{figure}
\centering
\includegraphics[width=1.5in]{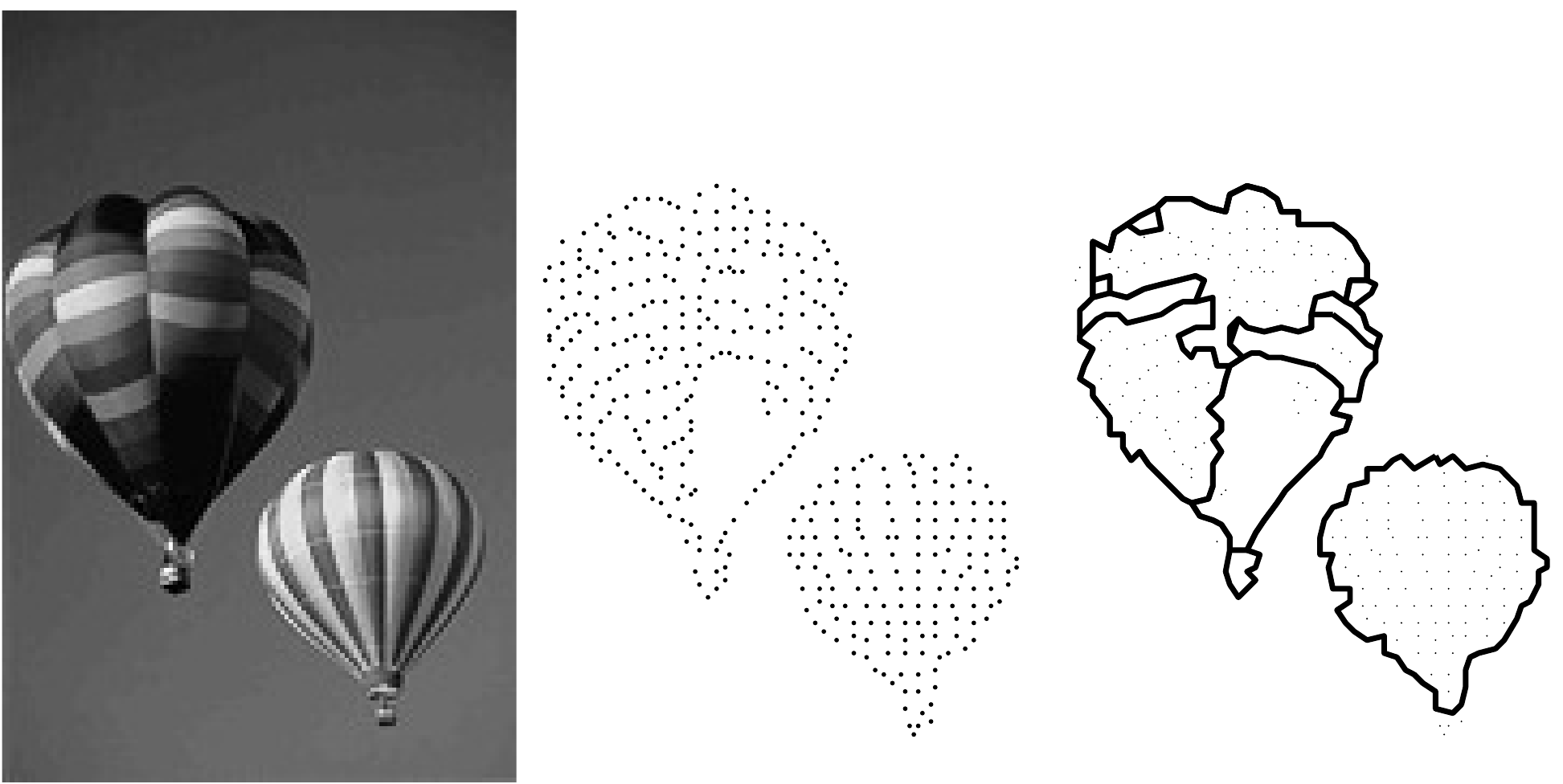}
\includegraphics[width=1.5in]{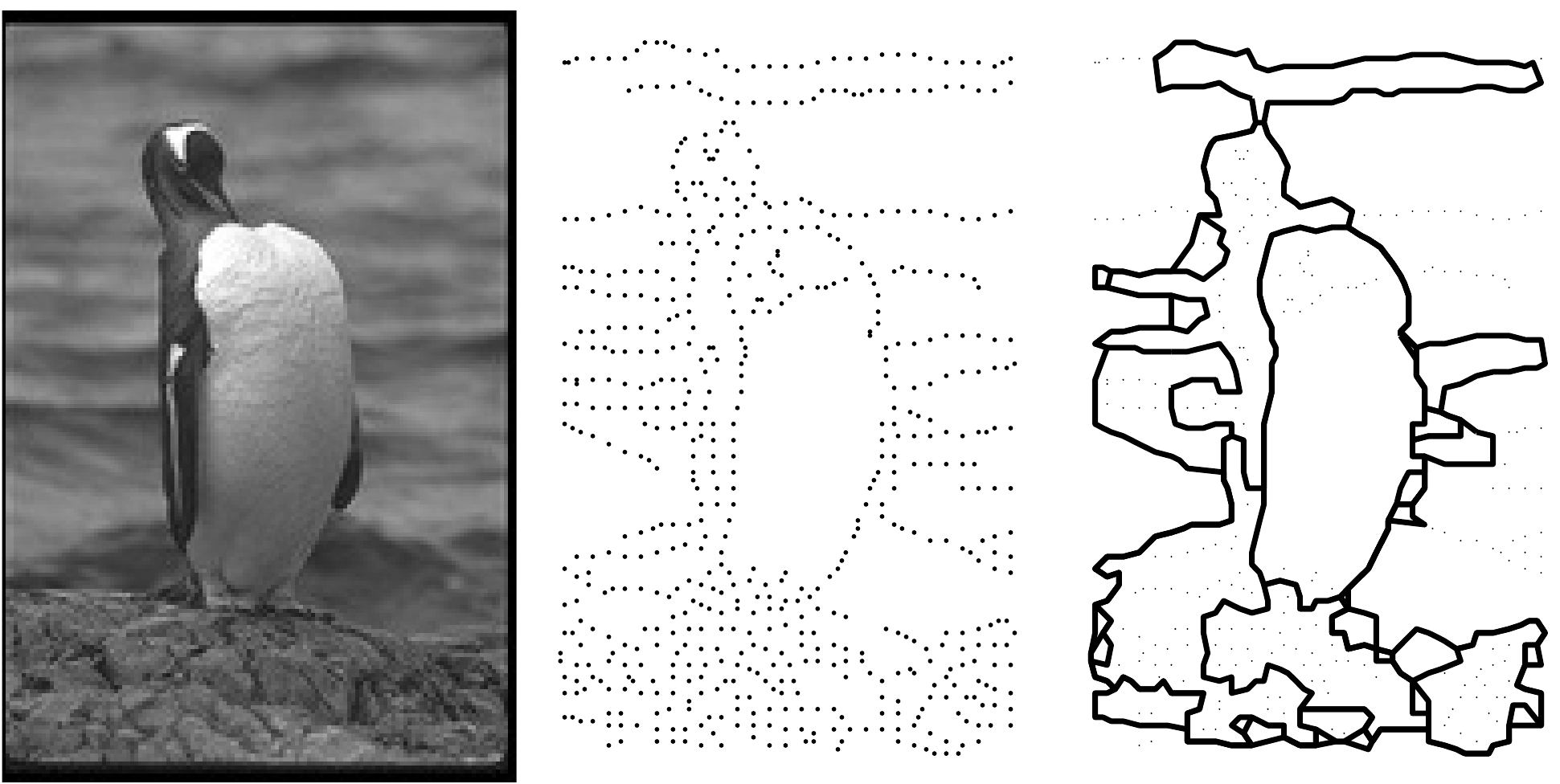}
\includegraphics[width=1.5in]{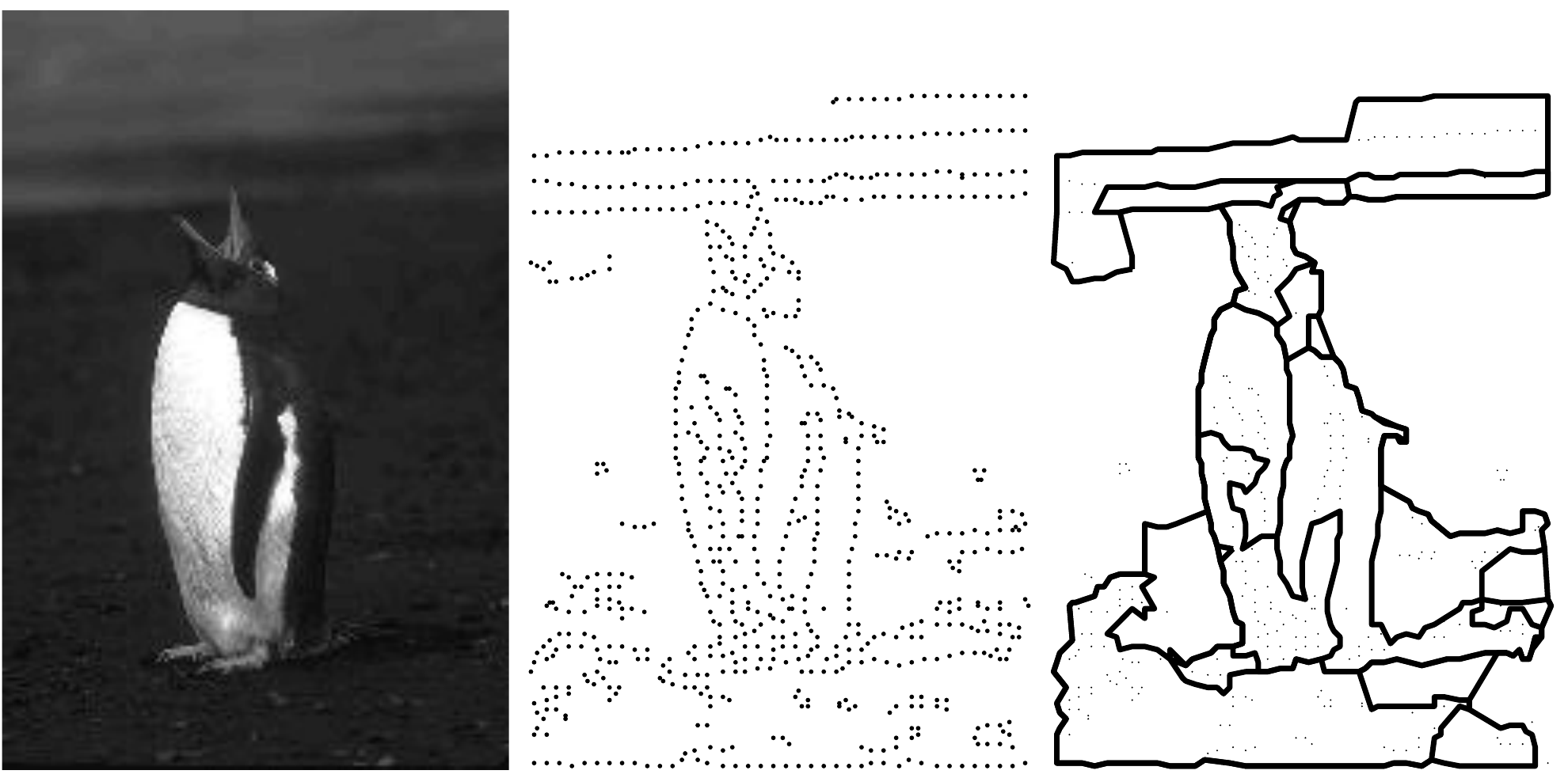}
\includegraphics[width=1.5in]{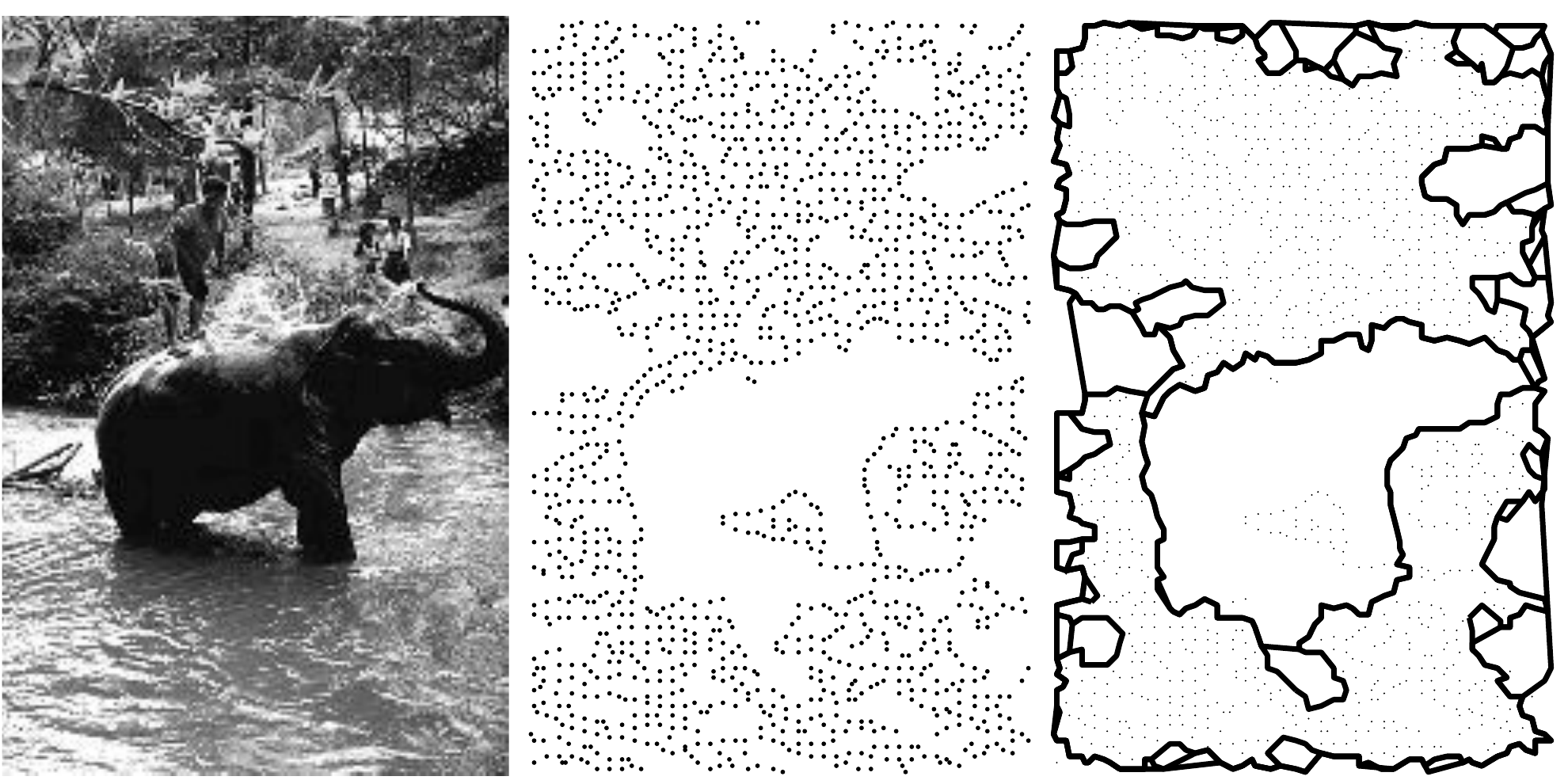}
\includegraphics[width=1.5in]{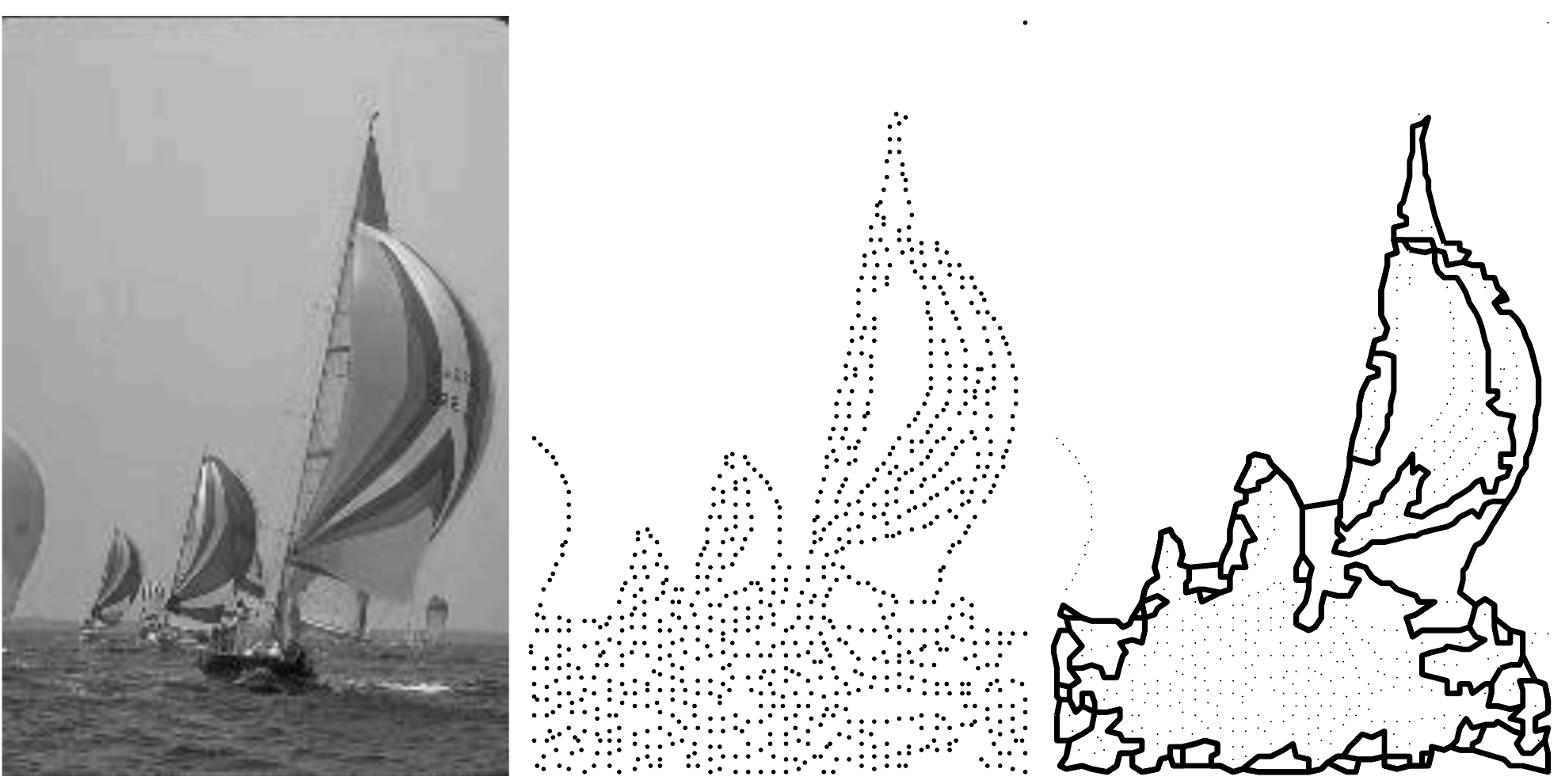}
\includegraphics[width=1.5in]{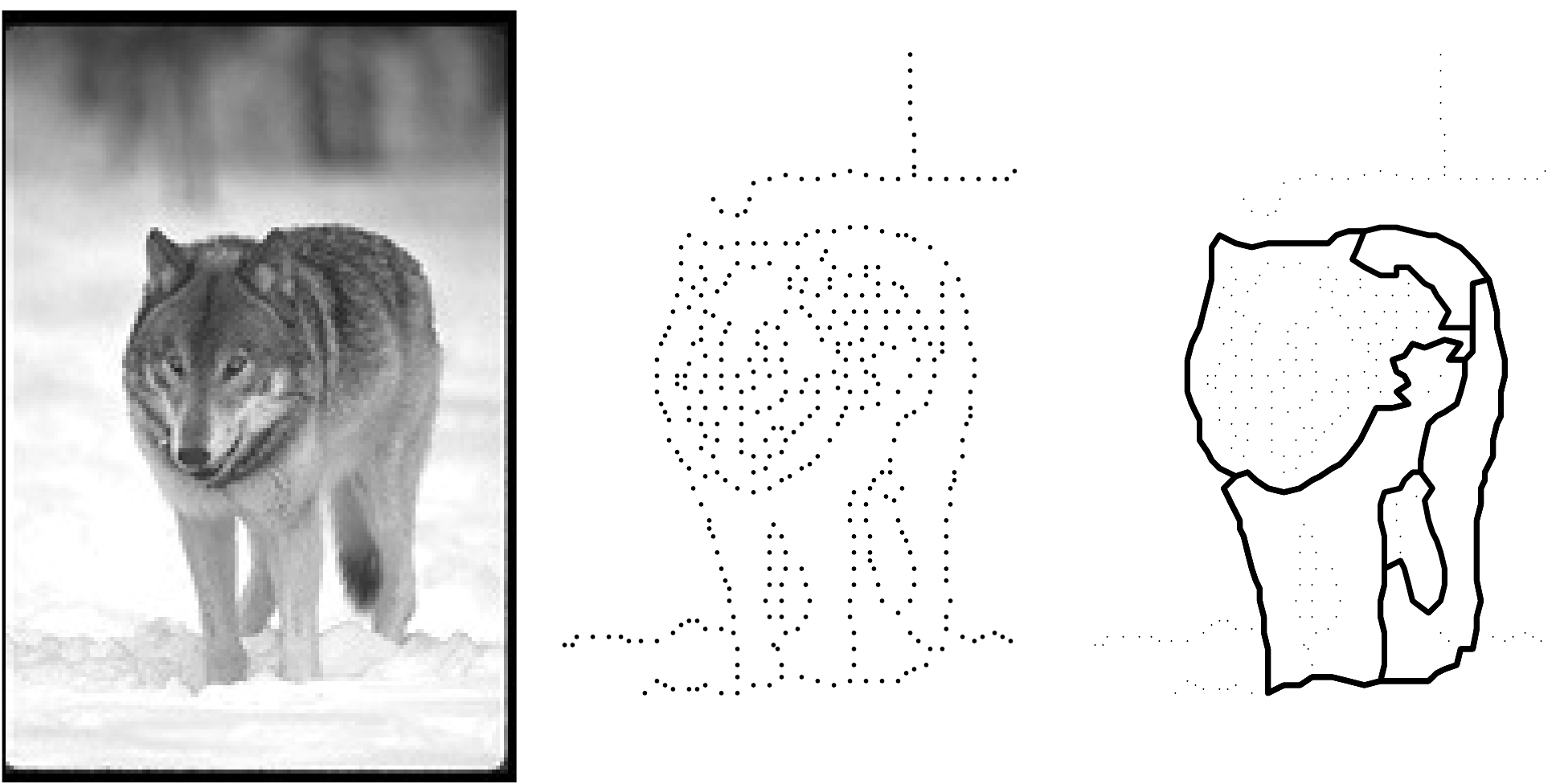}
\includegraphics[width=1.5in]{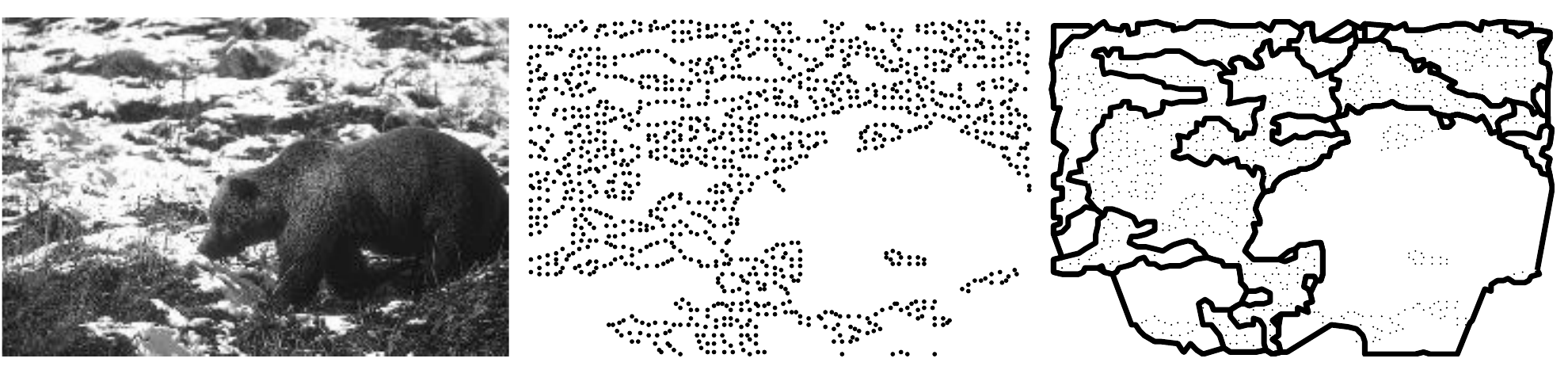}
\includegraphics[width=1.5in]{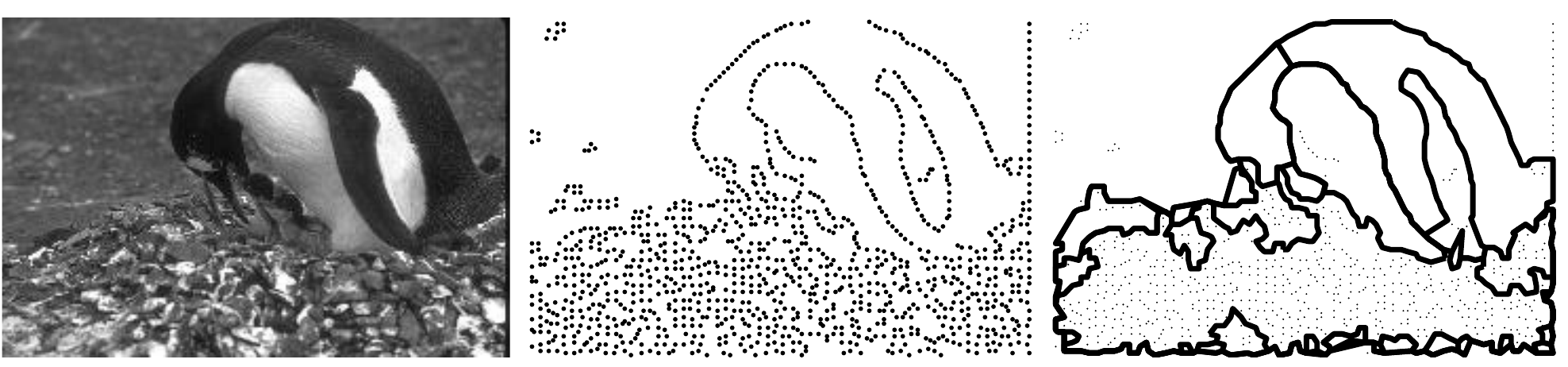}
\includegraphics[width=1.5in]{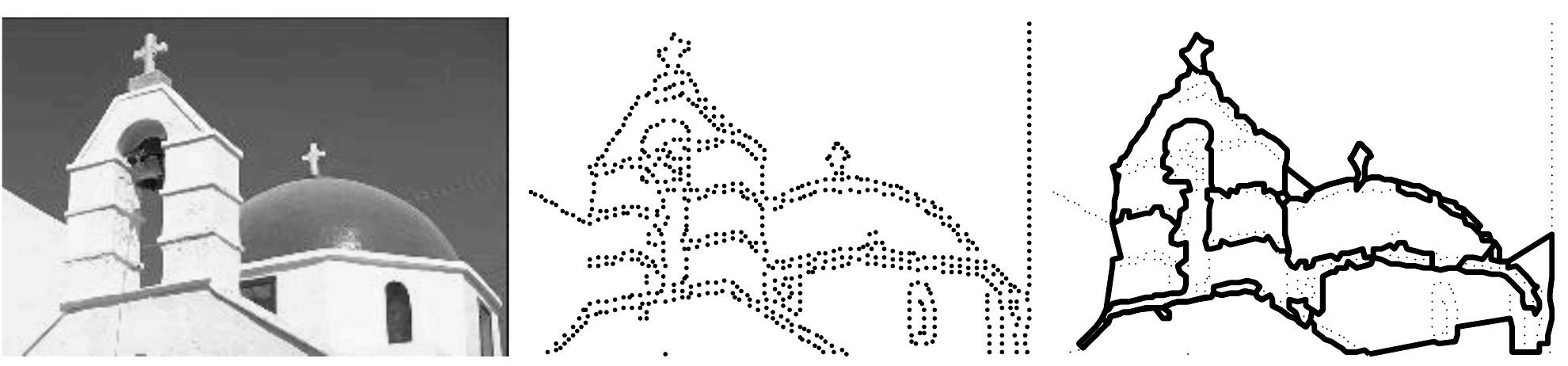}
\includegraphics[width=1.5in]{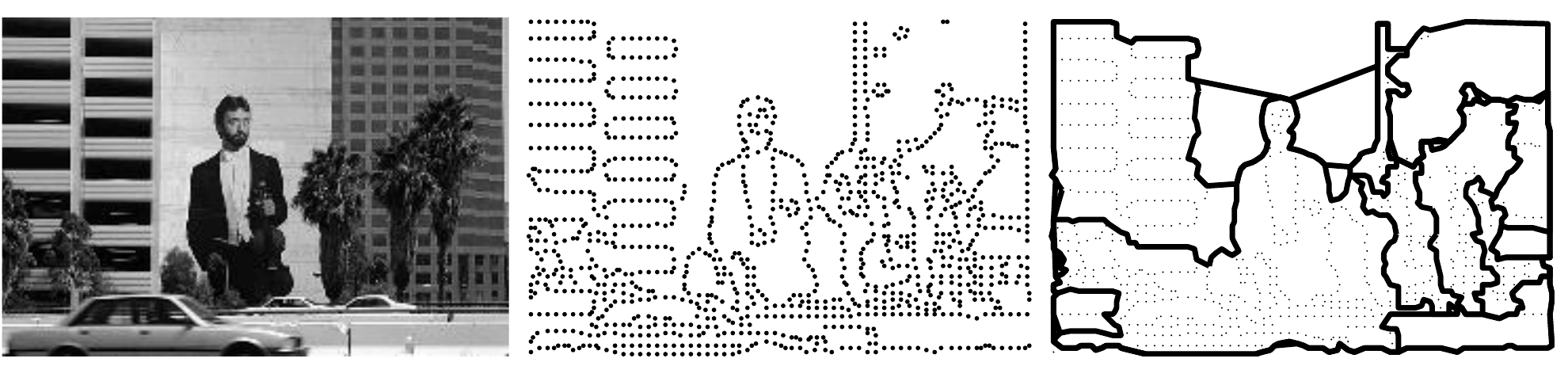}
\includegraphics[width=1.5in]{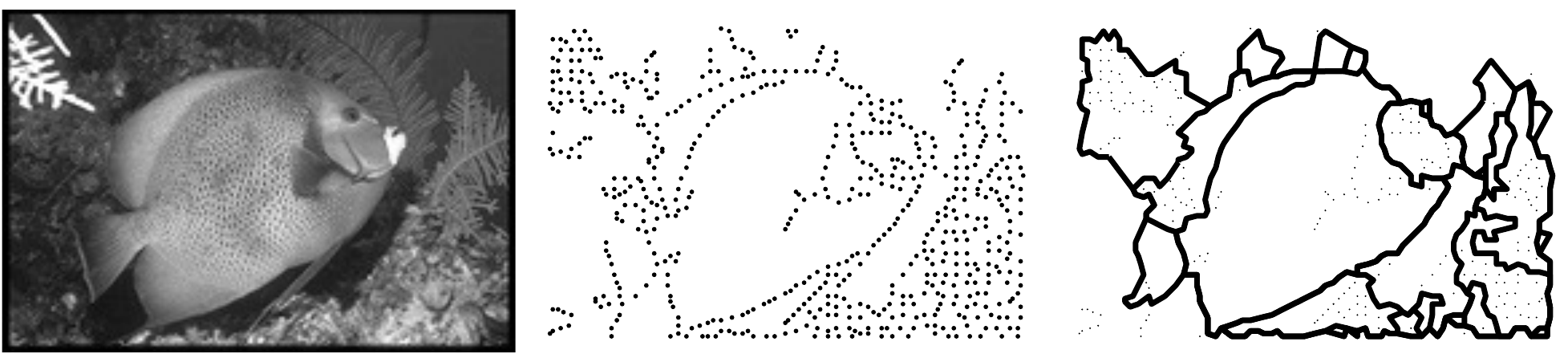}
\includegraphics[width=1.5in]{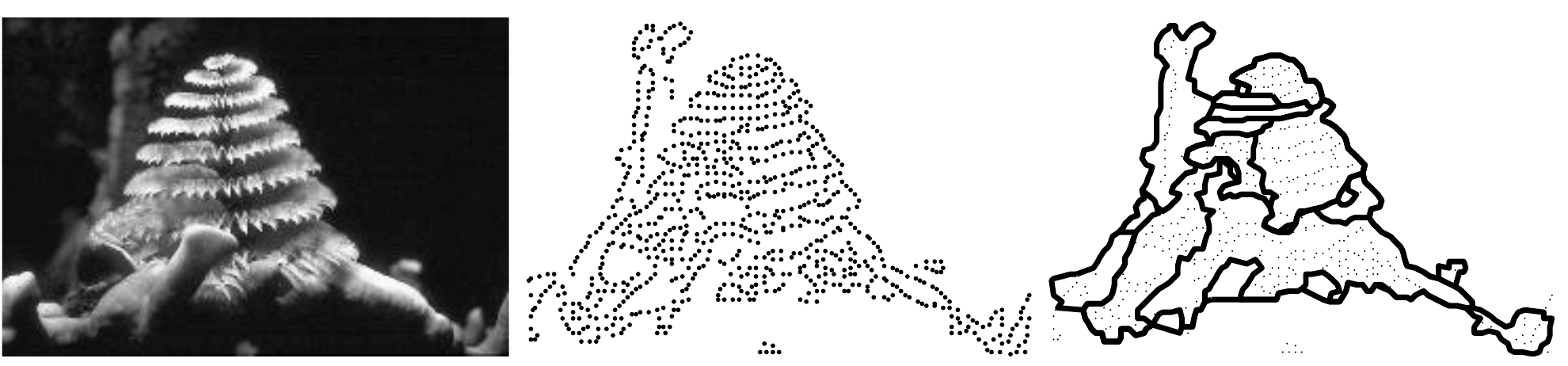}
\includegraphics[width=1.5in]{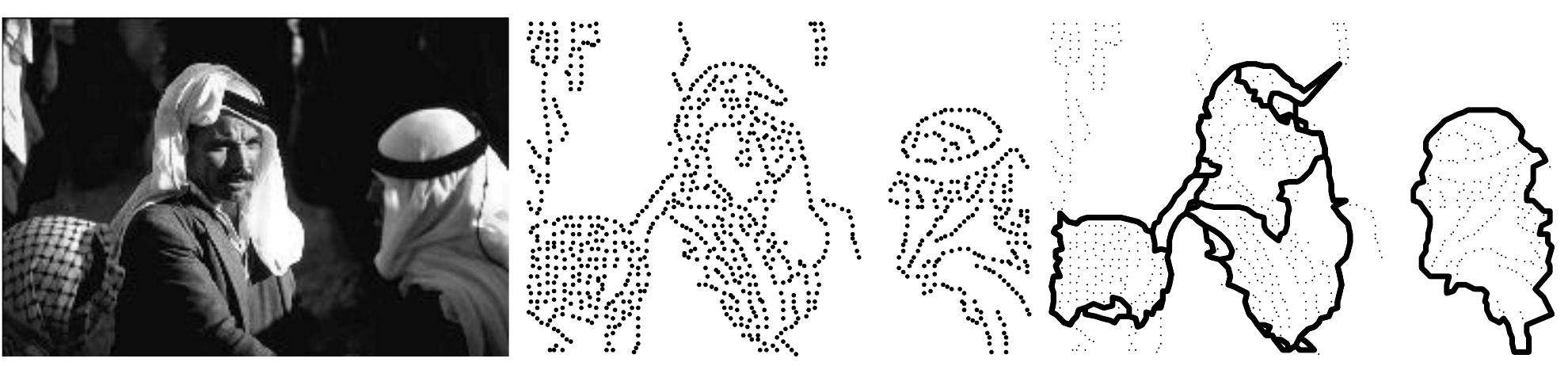}
\includegraphics[width=1.5in]{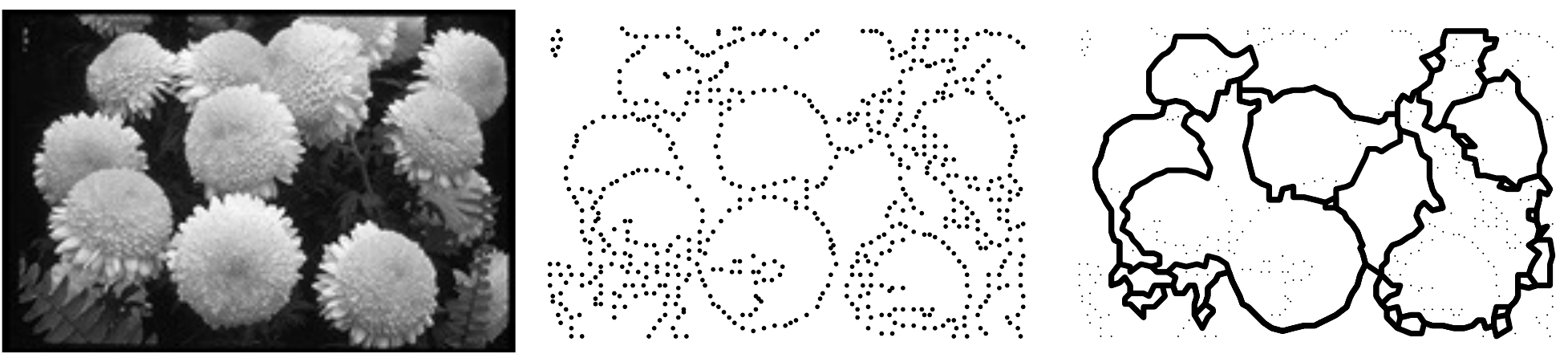}
\includegraphics[width=1.5in]{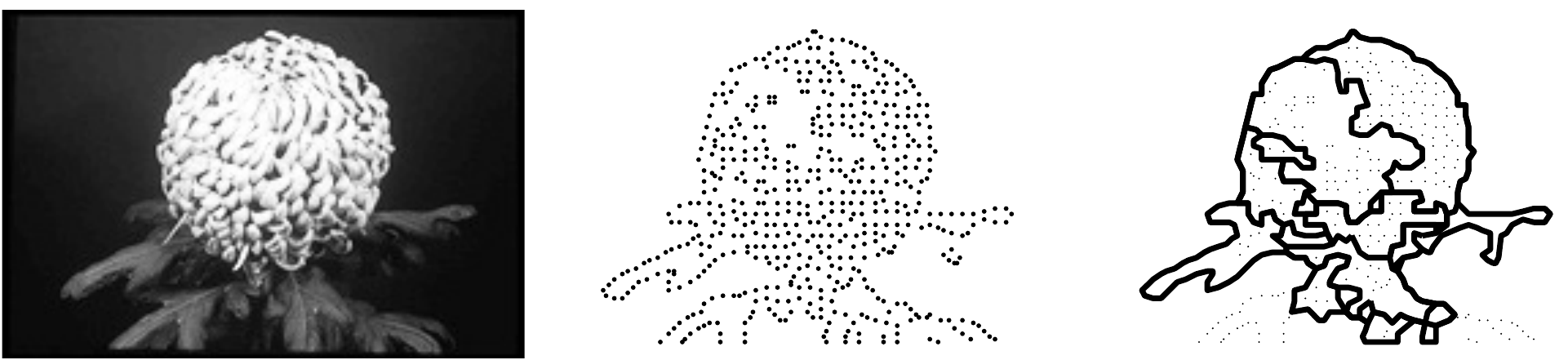}
\includegraphics[width=1.5in]{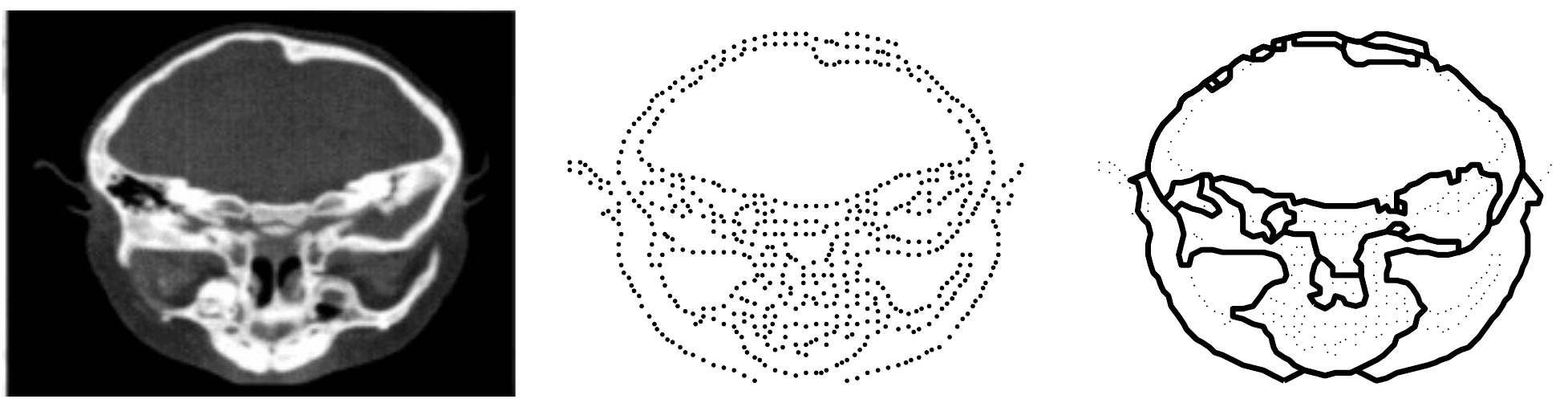}
\includegraphics[width=1.5in]{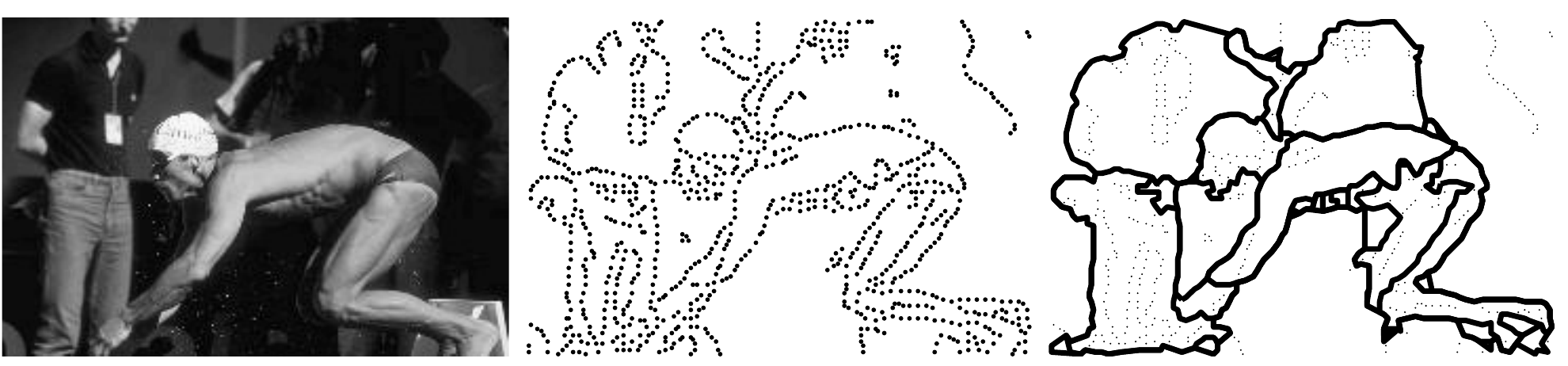}
\includegraphics[width=1.5in]{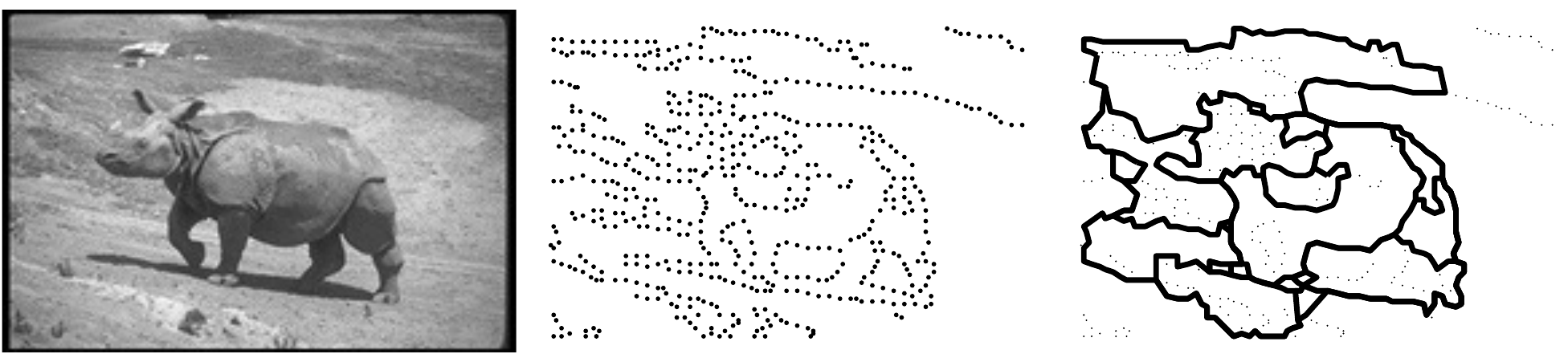}
\includegraphics[width=1.5in]{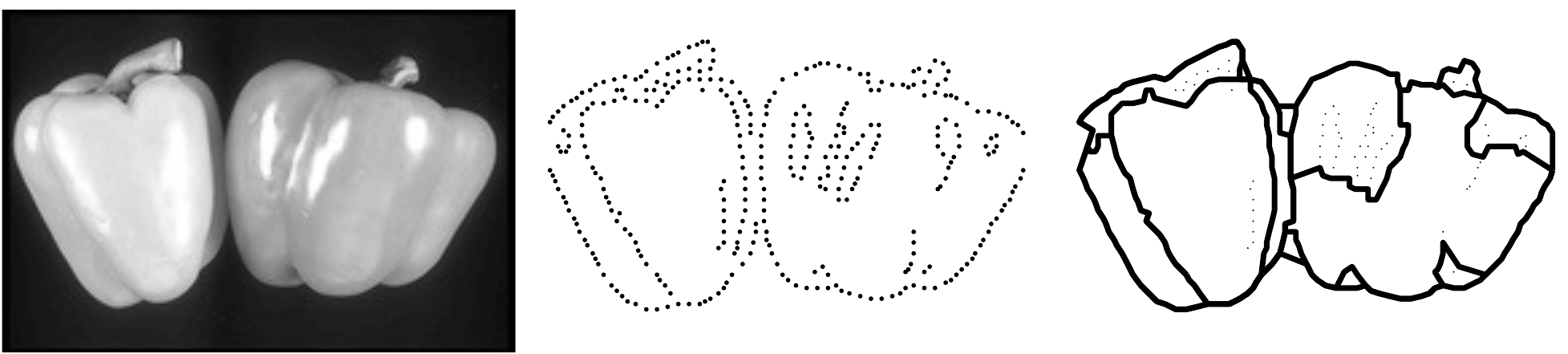}
\includegraphics[width=1.5in]{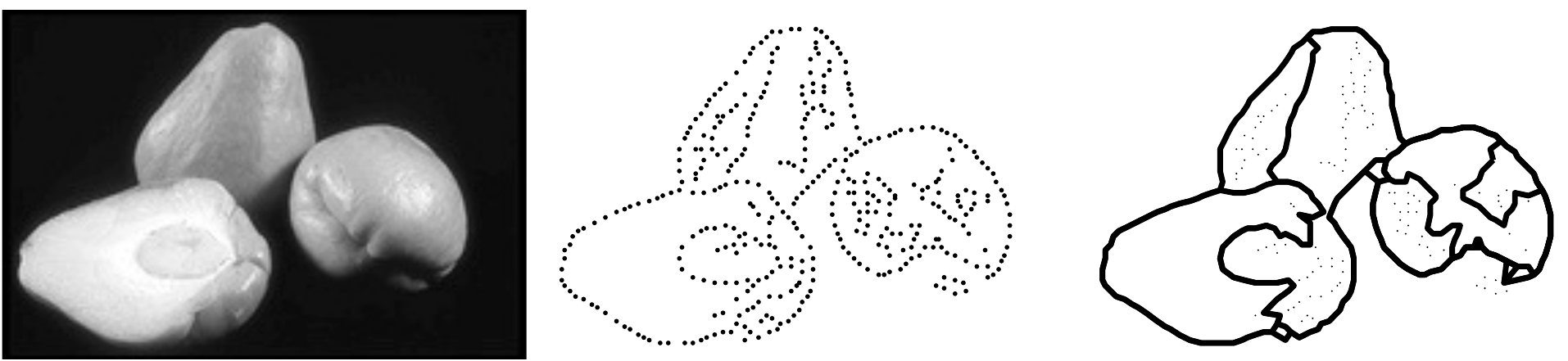}
\includegraphics[width=1.5in]{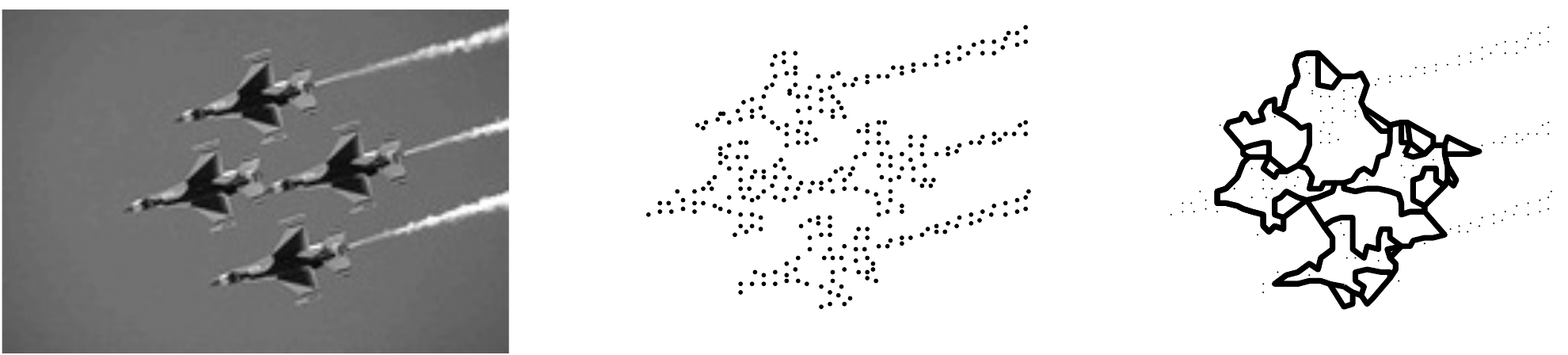}
\includegraphics[width=1.5in]{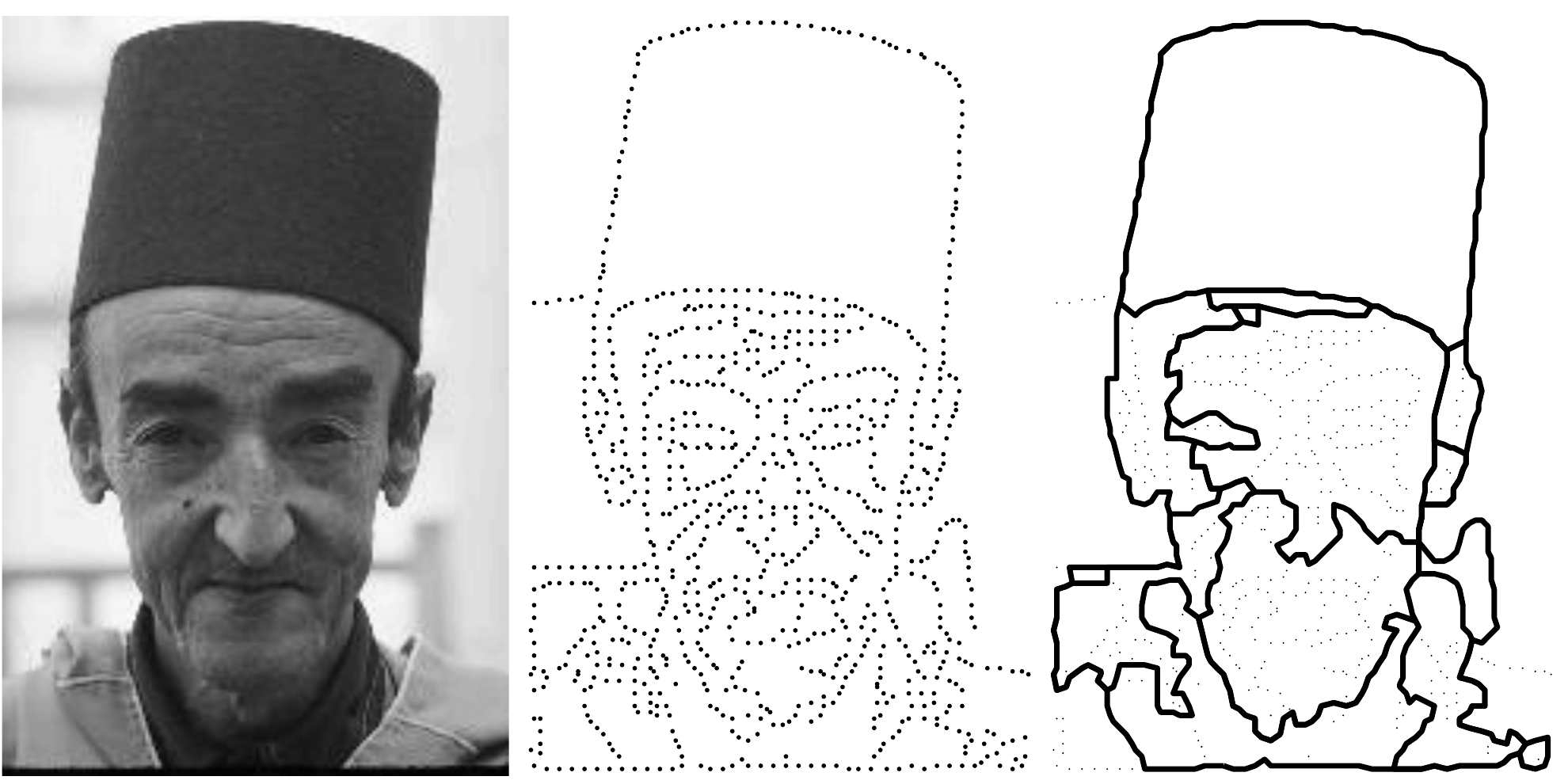}
\includegraphics[width=1.5in]{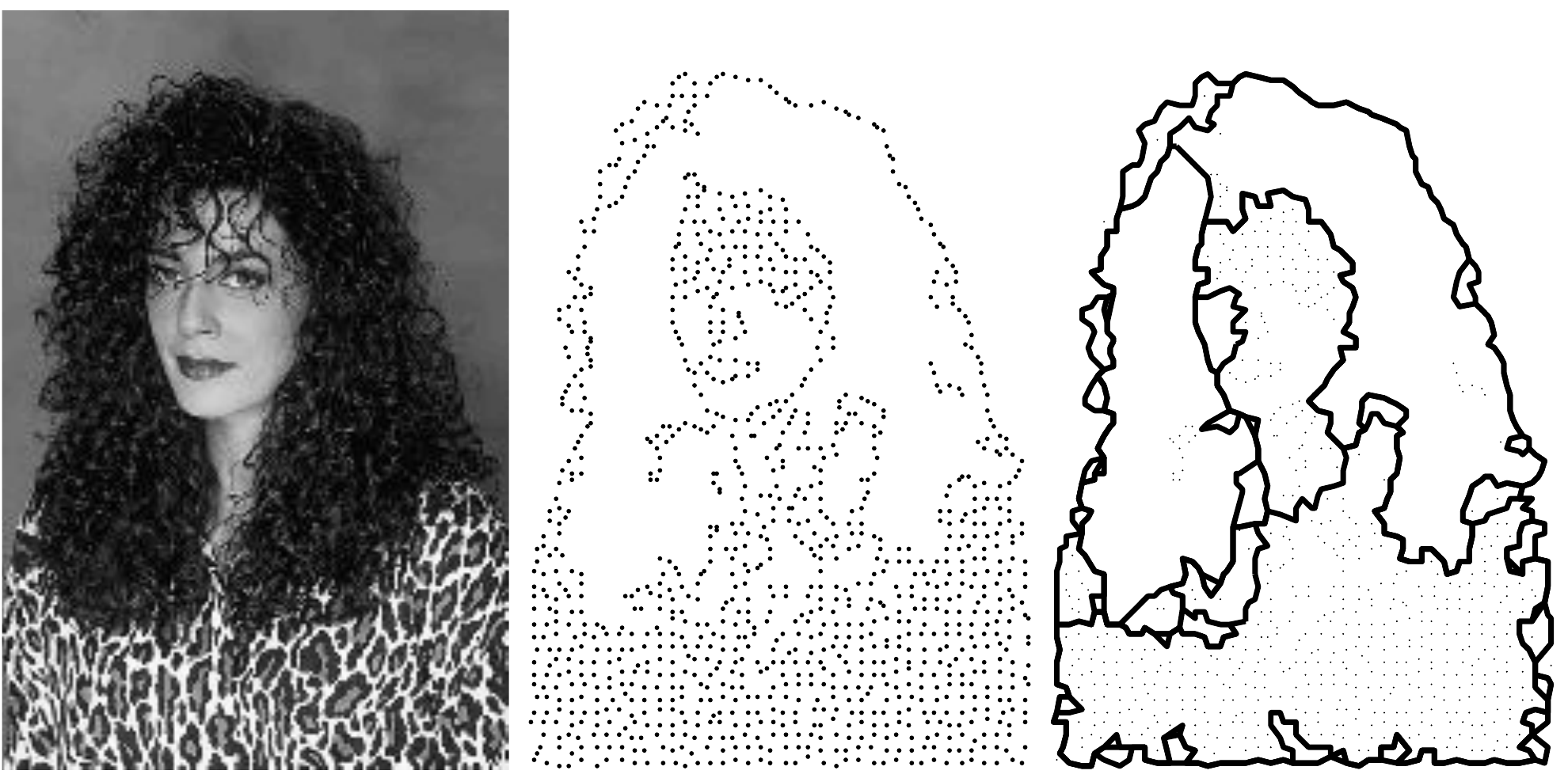}
\includegraphics[width=1.5in]{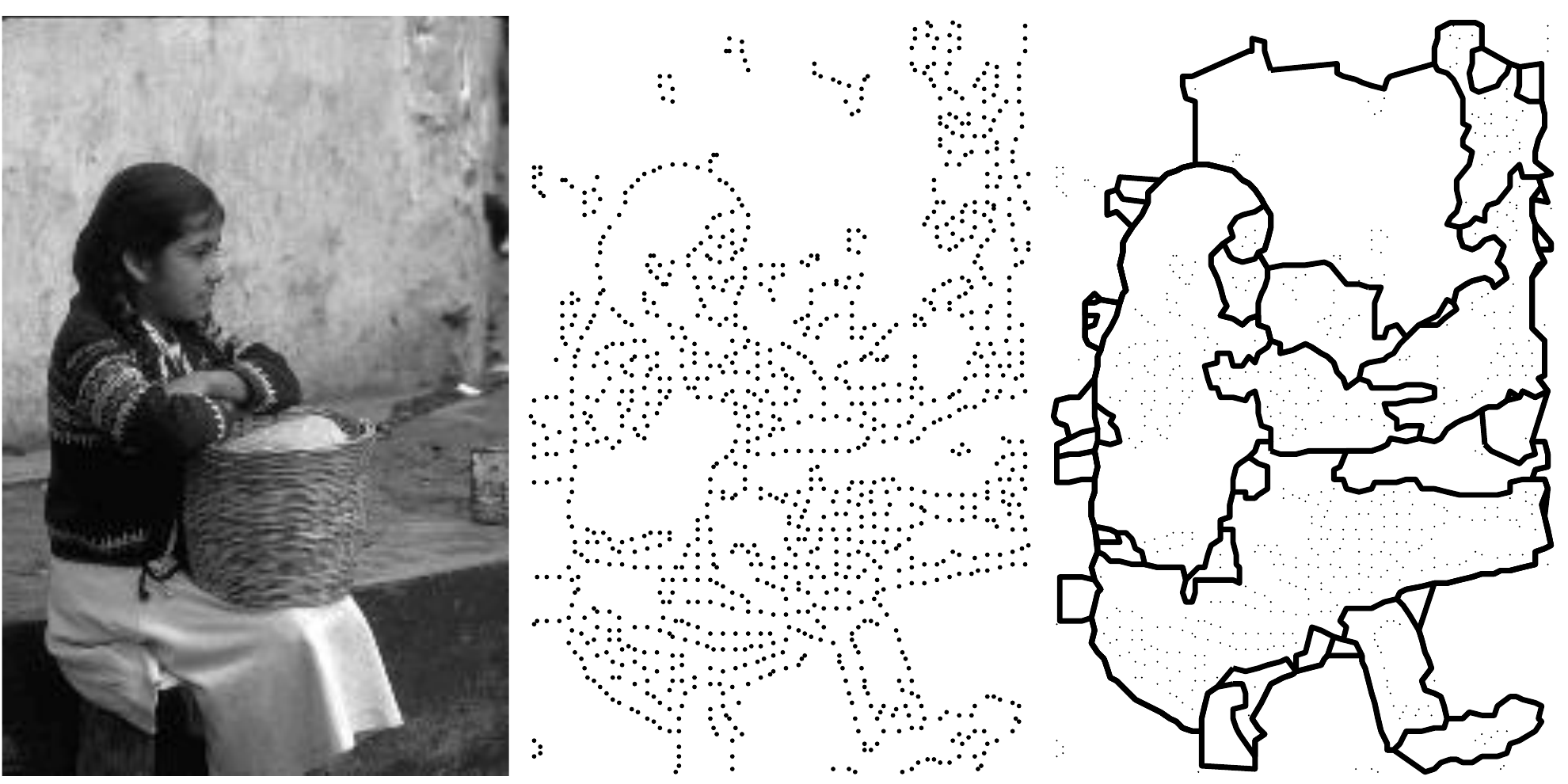}
\includegraphics[width=1.5in]{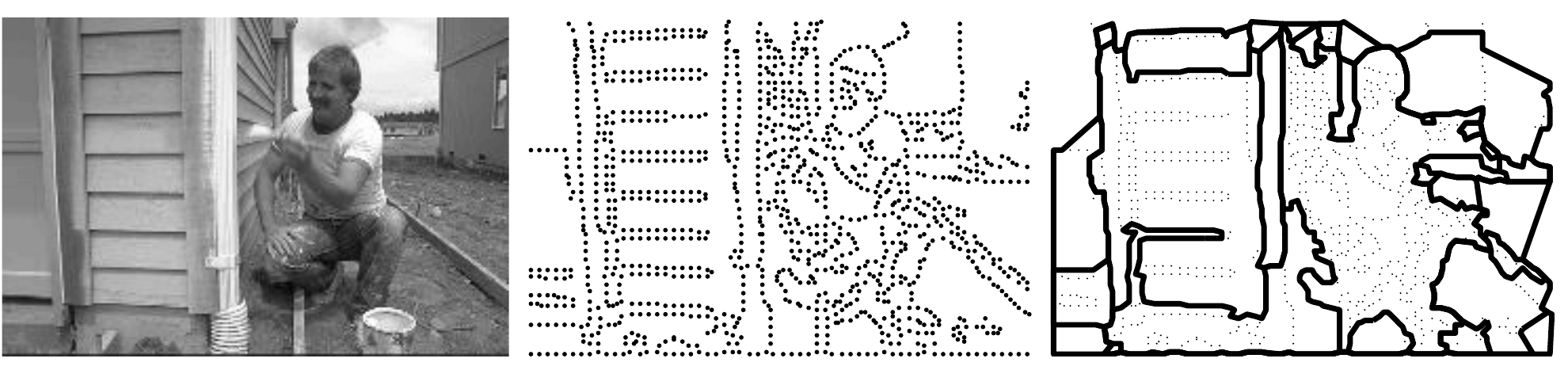}
\includegraphics[width=1.5in]{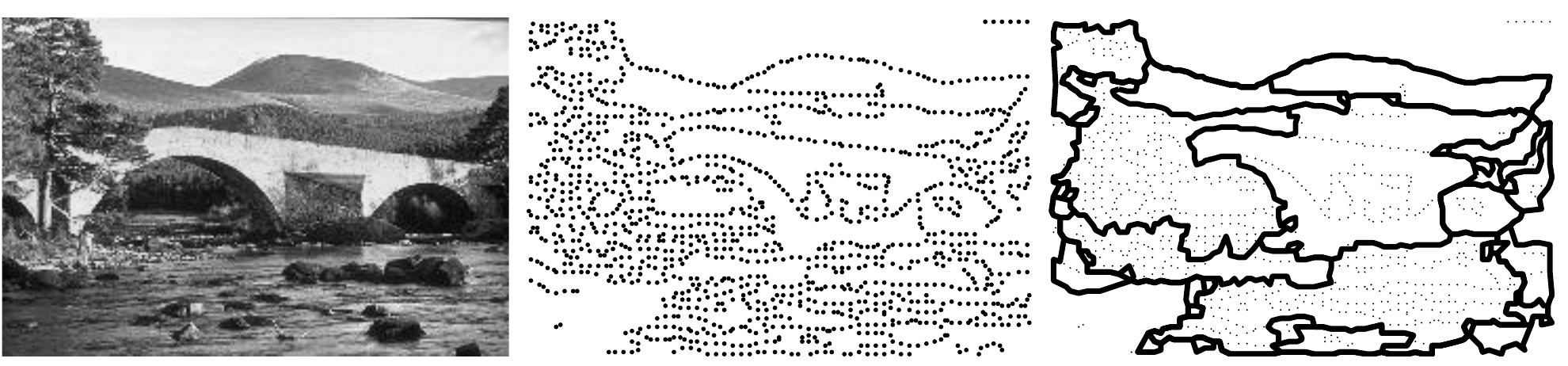}
\includegraphics[width=1.5in]{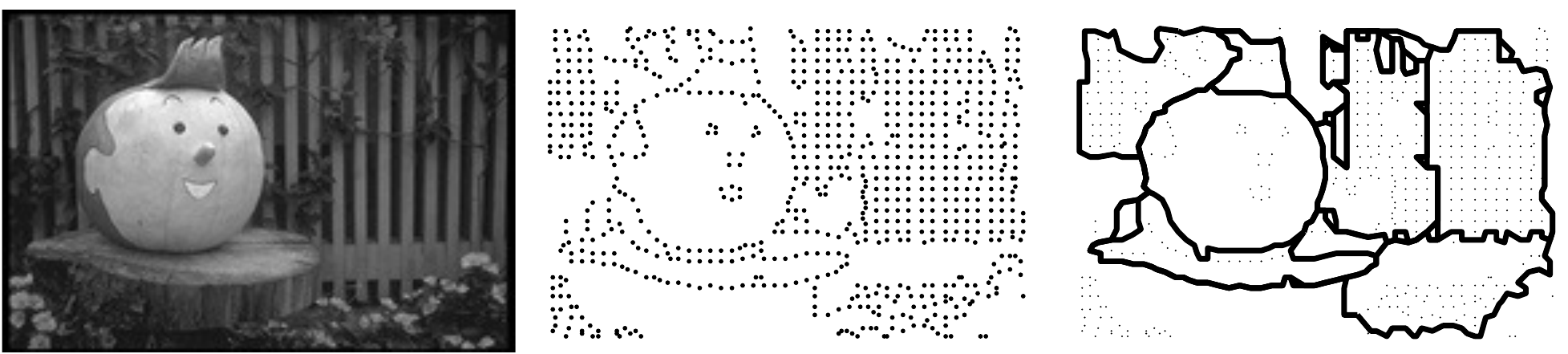}
\includegraphics[width=1.5in]{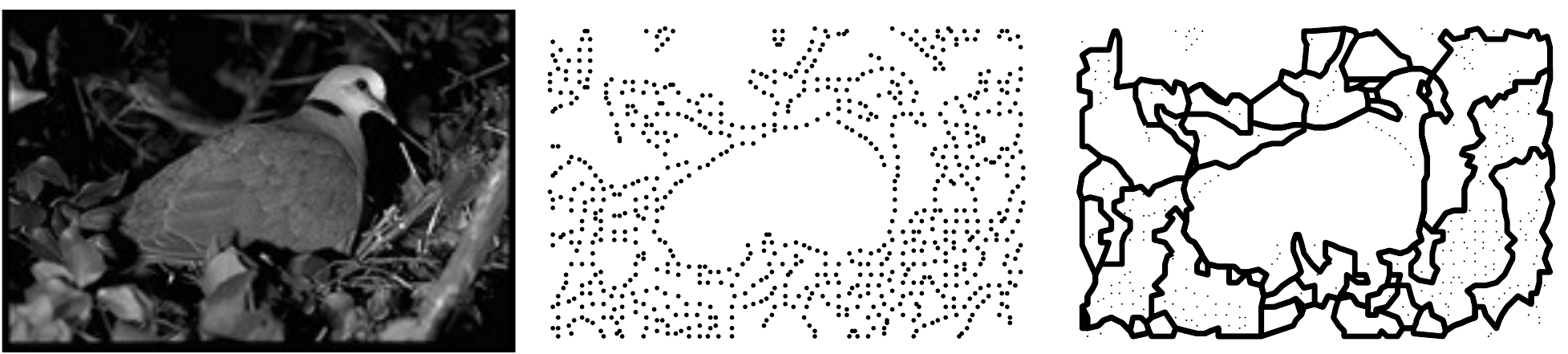}
\includegraphics[width=1.5in]{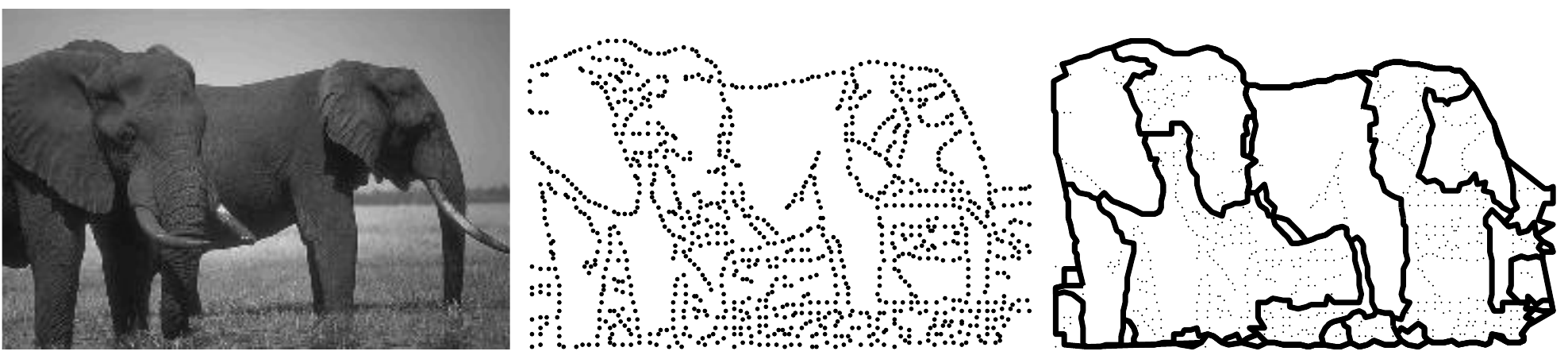}
\includegraphics[width=1.5in]{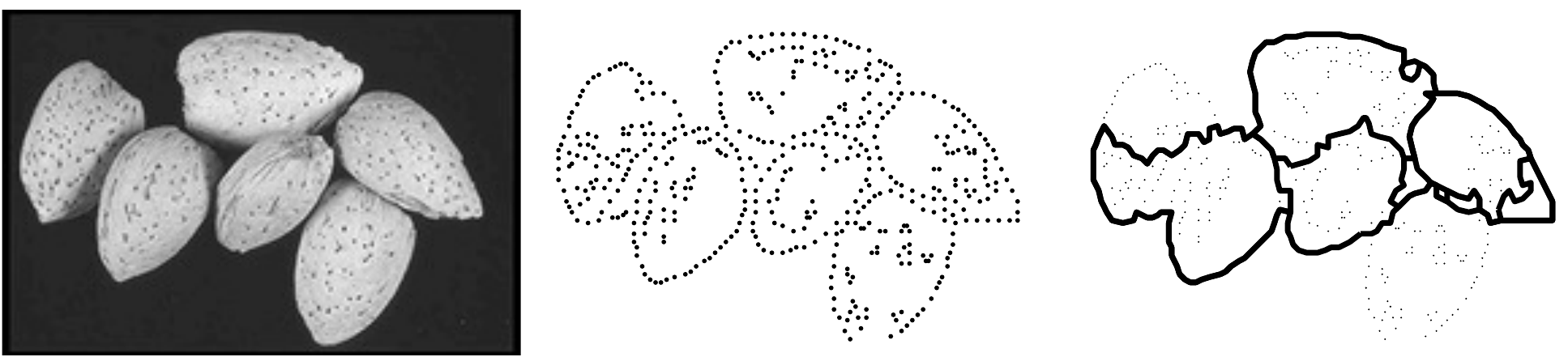}
\includegraphics[width=1.5in]{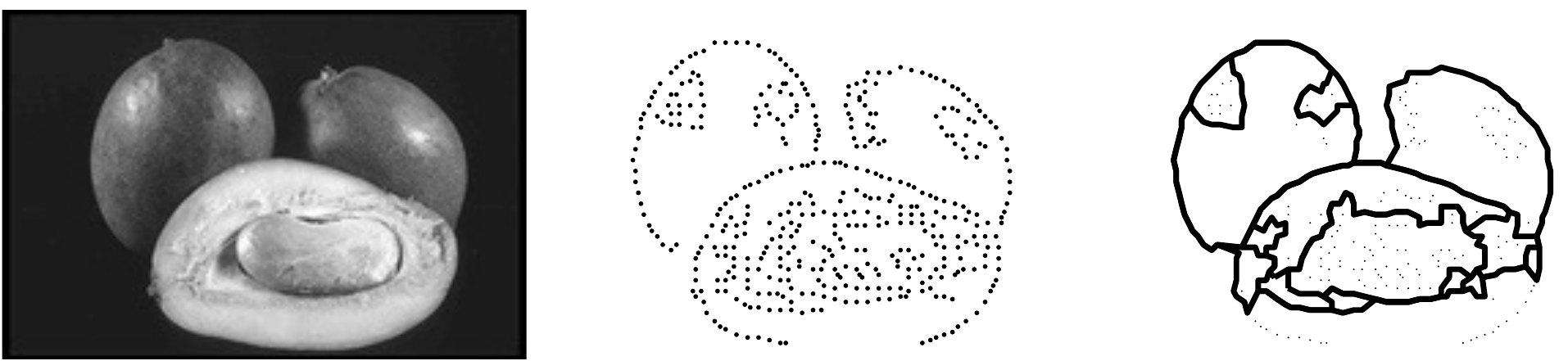}
\includegraphics[width=1.5in]{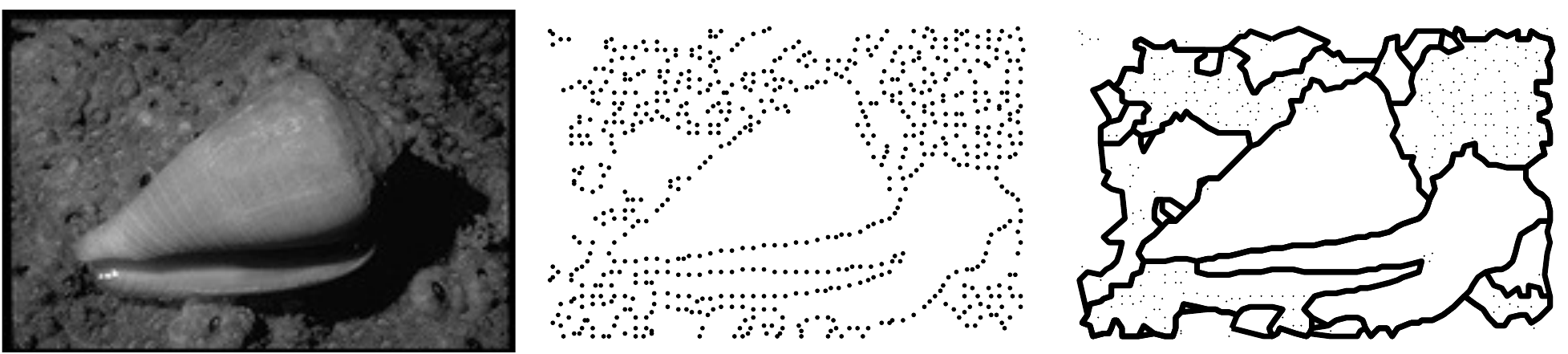}
\includegraphics[width=1.5in]{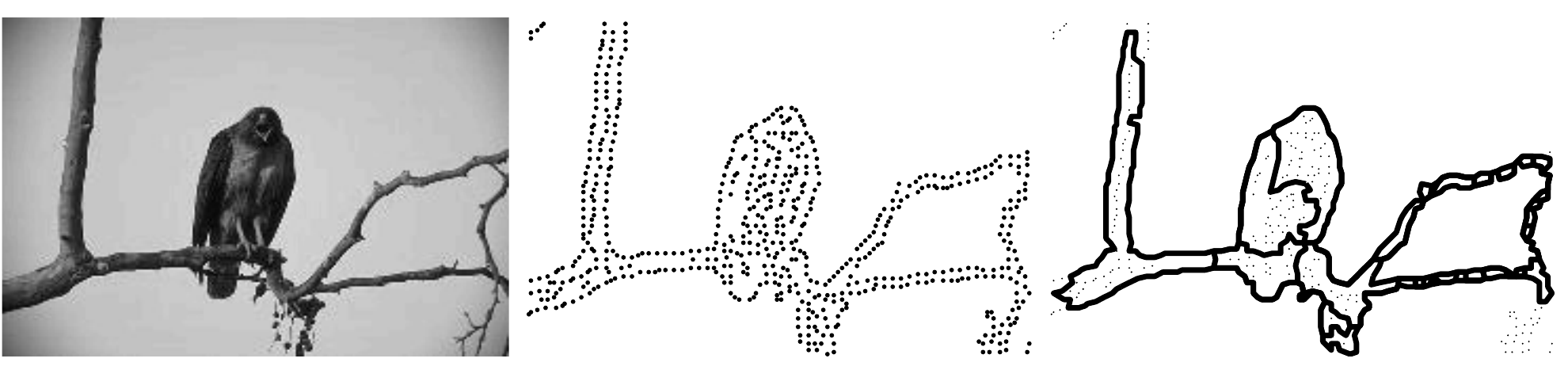}
\includegraphics[width=1.5in]{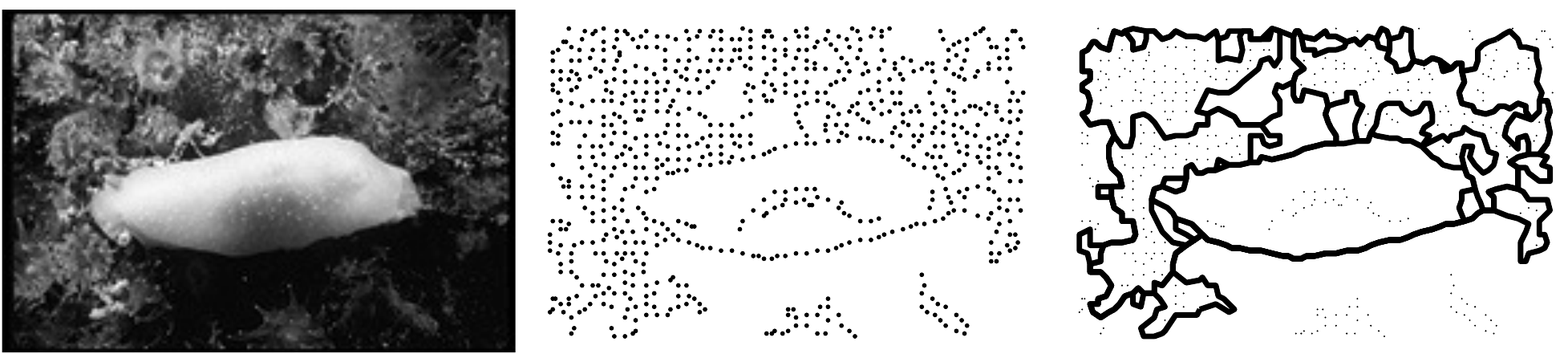}
\includegraphics[width=1.5in]{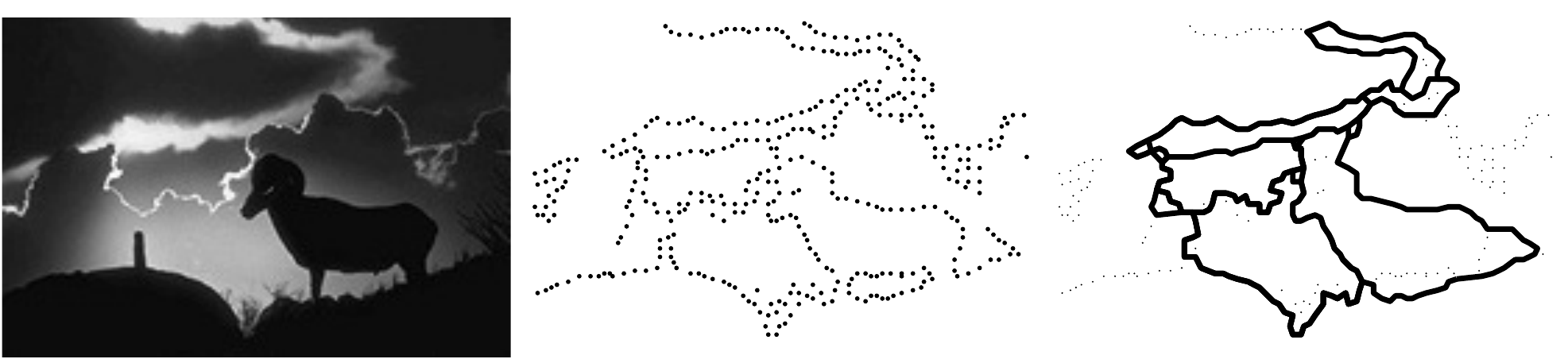}
\includegraphics[width=1.5in]{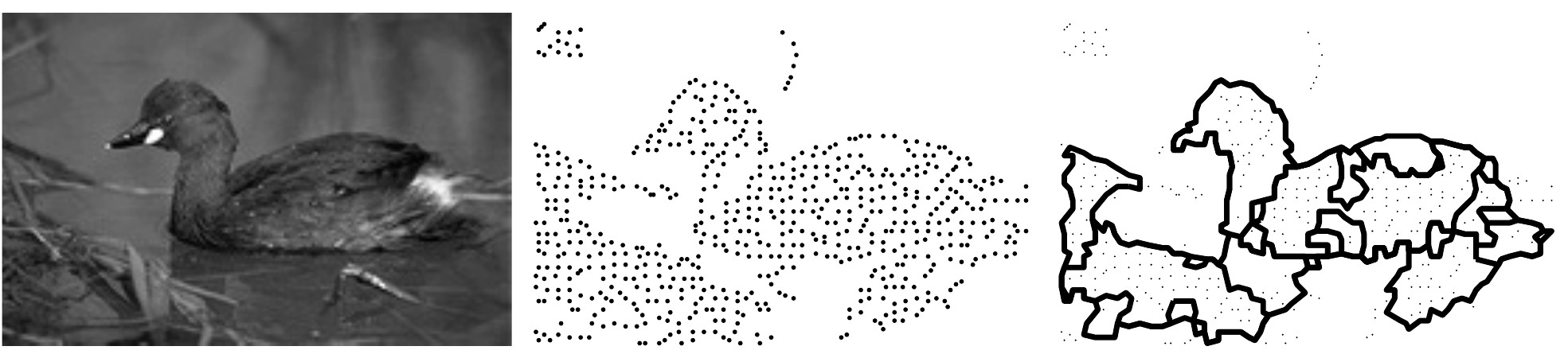}
\includegraphics[width=1.5in]{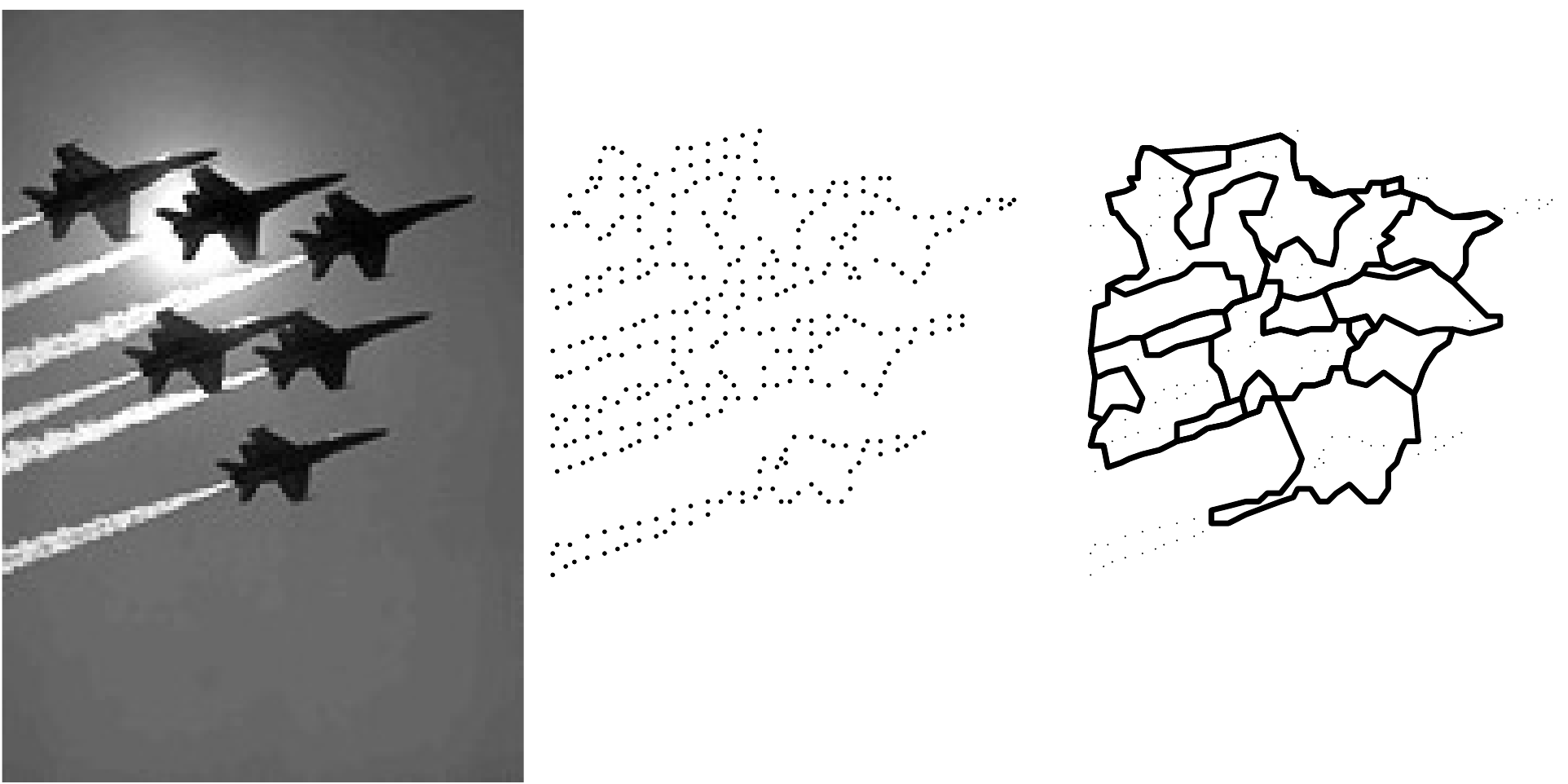}
\includegraphics[width=1.5in]{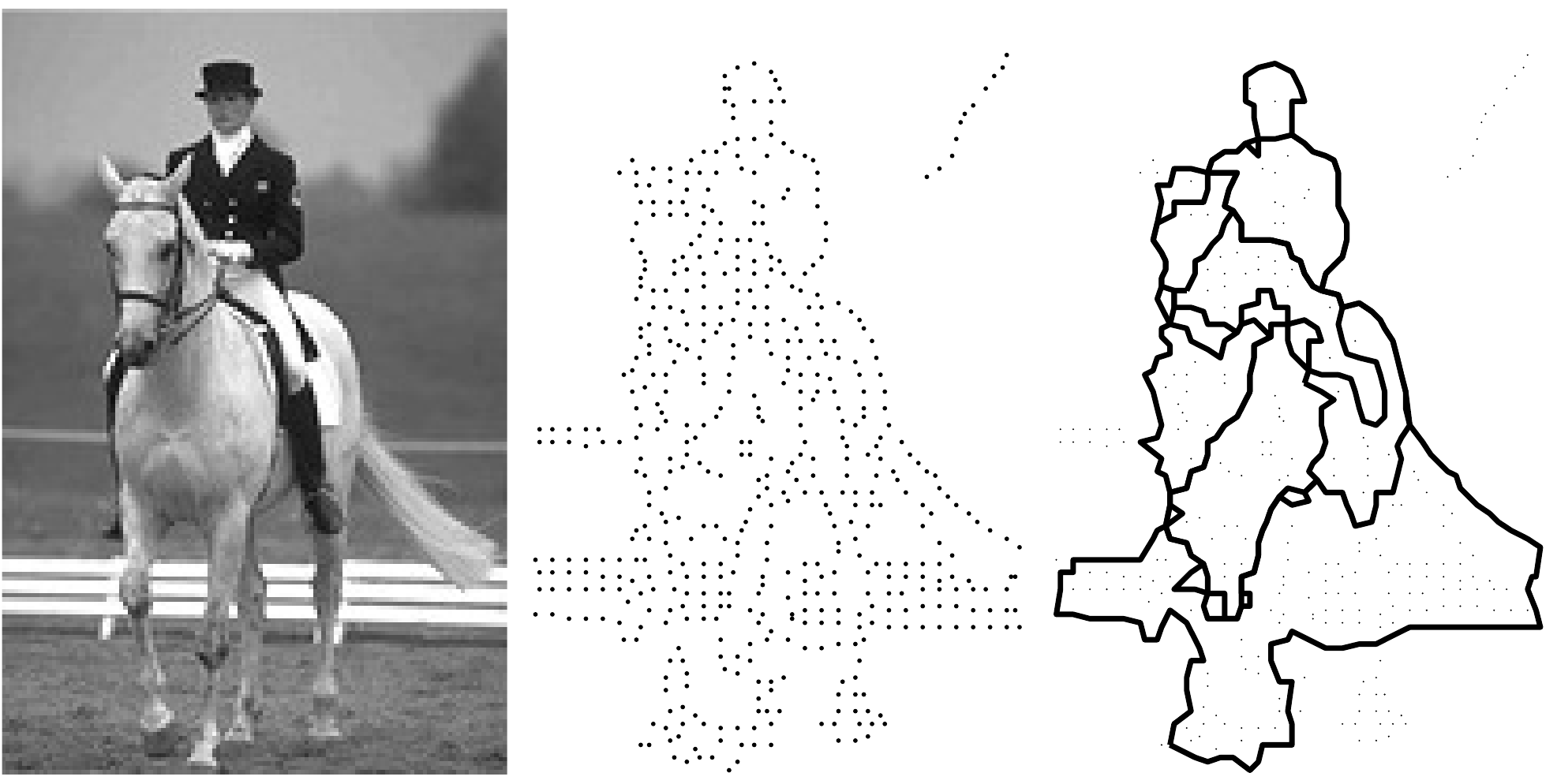}
\includegraphics[width=1.5in]{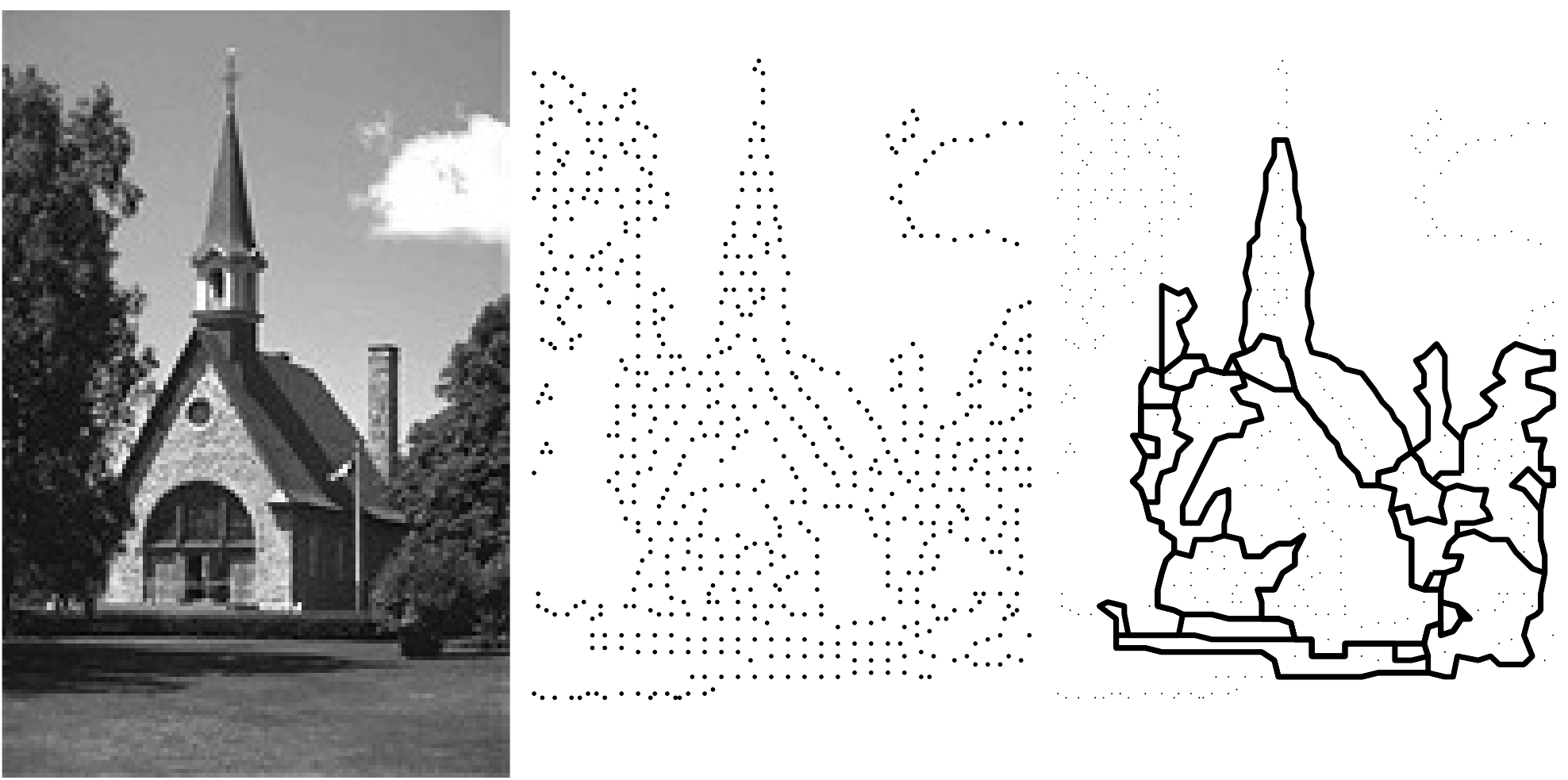}
\includegraphics[width=1.5in]{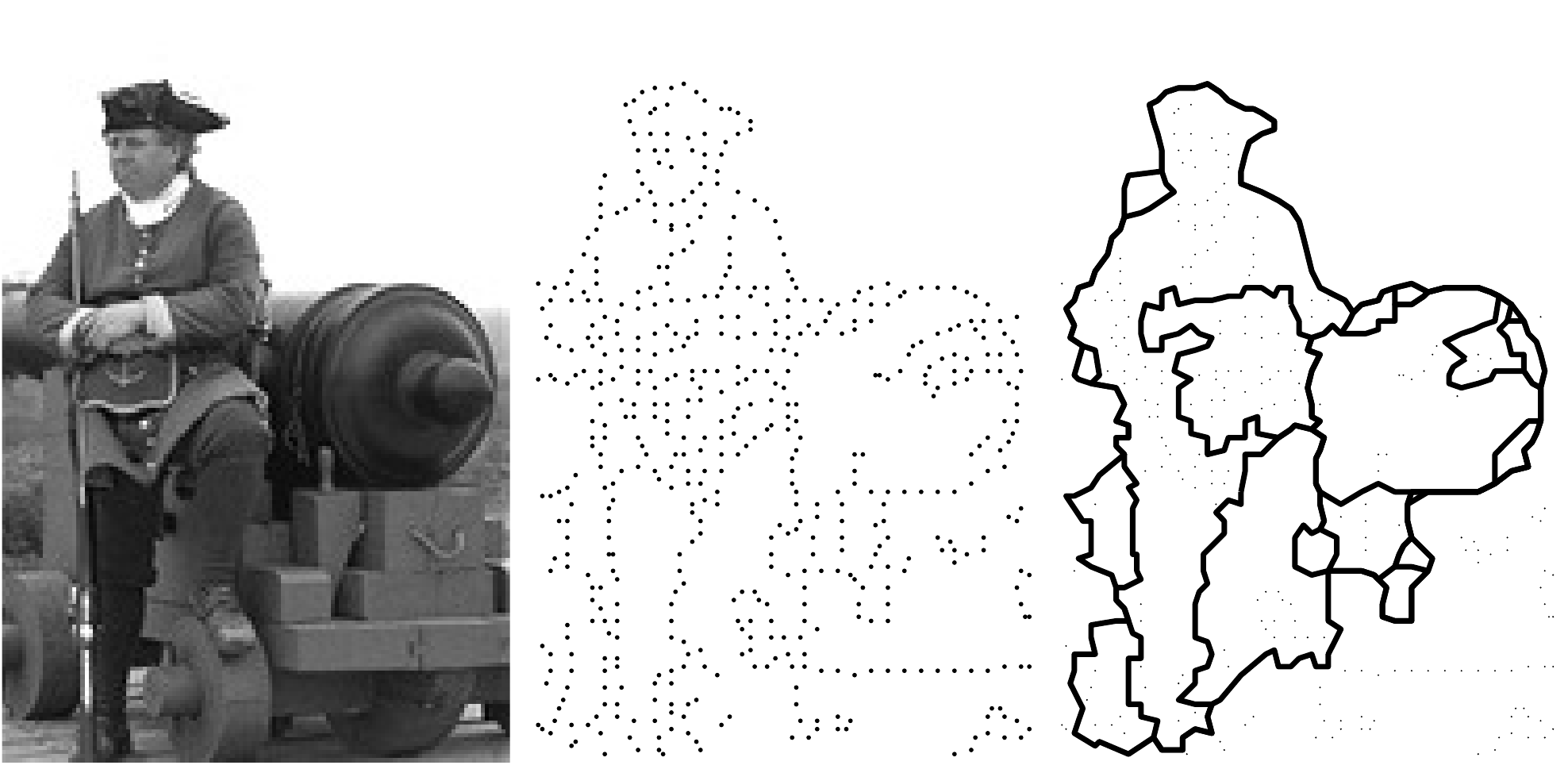}
\includegraphics[width=1.5in]{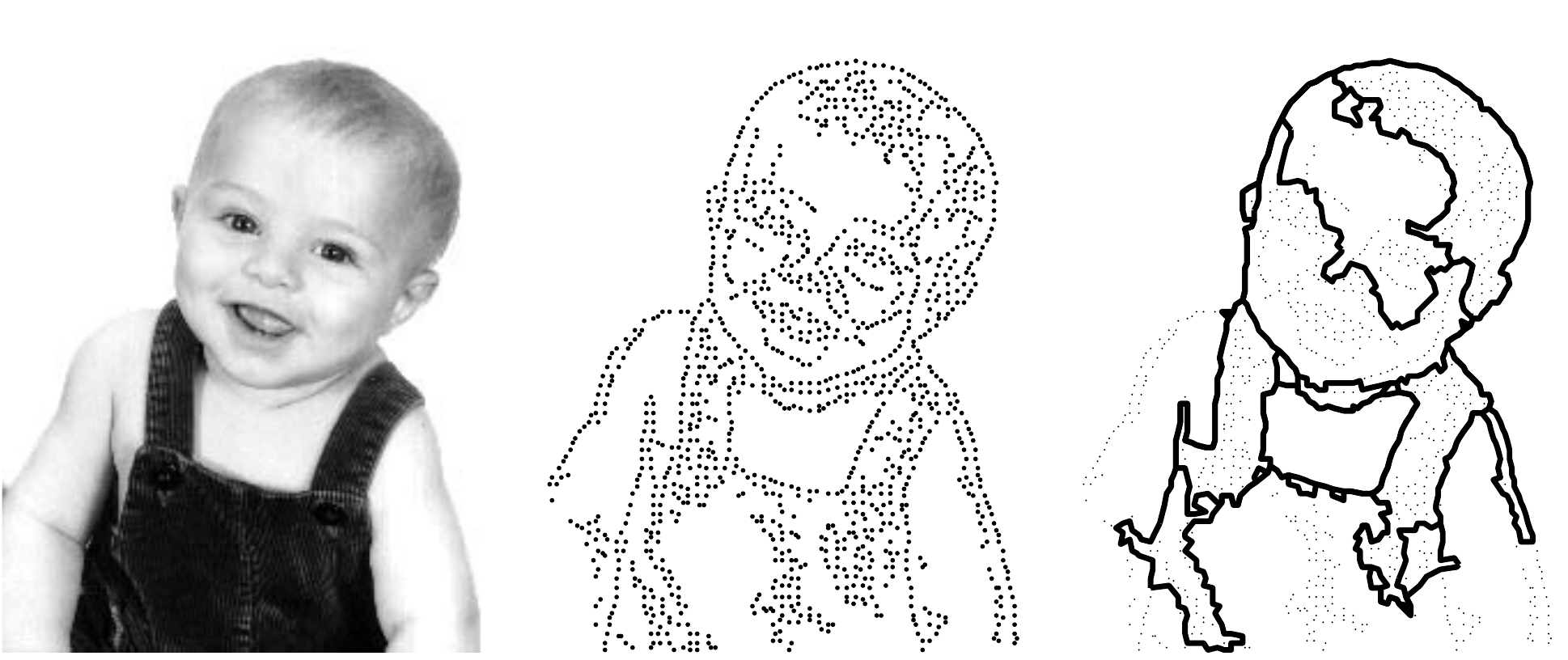}
\includegraphics[width=1.5in]{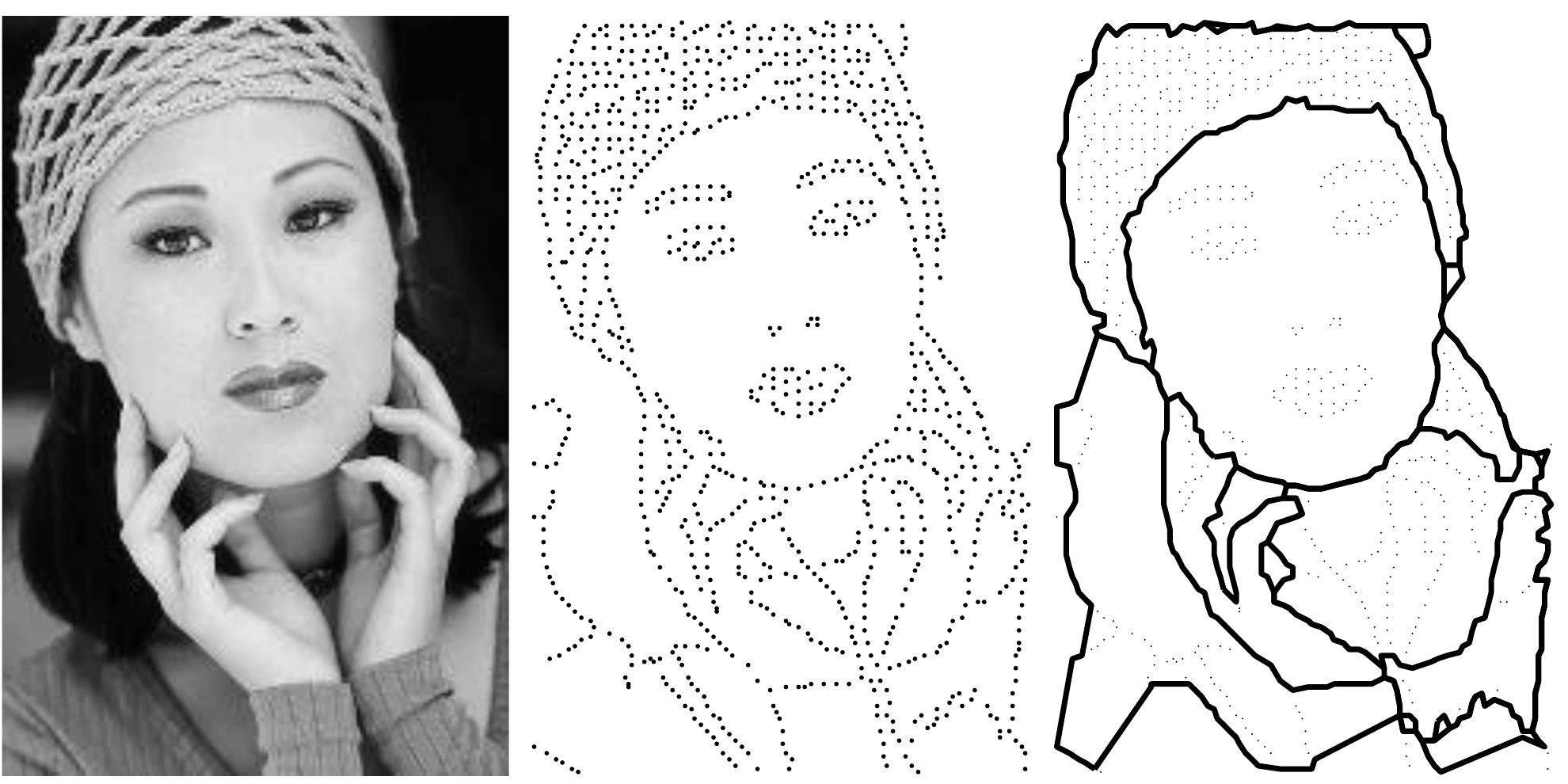}
\caption{Results of applying the grouping algorithm to Canny edge images.
From left to right: input image, dot pattern, five most salient shapes superimposed on the dot pattern.} \label{fig:edgeResults}
\end{figure}

\section{Conclusion}\label{Conclusion}

As our experiments show, the straight polygon representation captures salient shape contained in a dot pattern.
We attribute the effectiveness of the straight polygon representation to the following three characteristics. First, events are at discrete points in time and space. Between the events, polygons retain their shapes and no distortion is introduced. In comparison, the medial axis transform undergoes deformation at continuously in time. This characteristic brings evolution of shapes with less amount of distortion than other smoothing techniques. It also makes a simple recursive implementation and straightforward organization of the events. Second, simple tree type data structure can be constructed to fully capture the evolution of polygons. It can then be used to reverse the deformation and trace a vertex of a polygon to a point in the dot pattern. Third, the curvature dependent smoothing characterized by (\ref{eq:velocity}) finds grouping more quickly than the medial axis transform. See Figure \ref{fig:DilationComparison} for an illustration of comparing the straight skeleton transform and medial axis transform. The thick solid line is an initial minimum spanning tree where the transformation starts. The thick dashed line shows the front of the straight polygon transform when the first split event took place. The thin solid line shows the front of the medial axis transform. Only locations where the fronts are vastly different between the two are shown. This characteristics makes sharp corners important features in defining the grouping.

\begin{figure}
\centering
\includegraphics[width=1.0in]{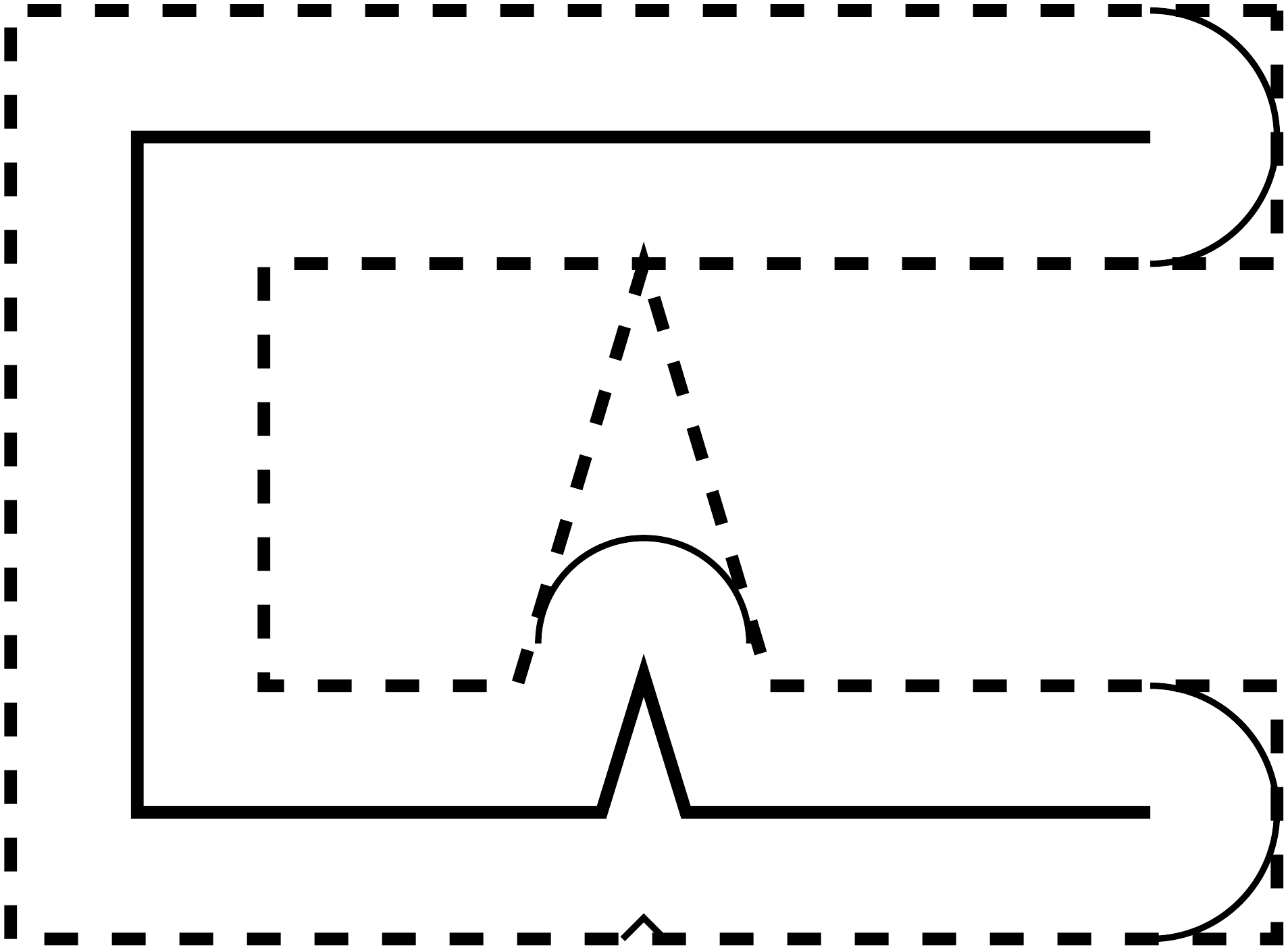}
\caption{Comparison of shape deformation between straight offset polygons and medial axis transform.} \label{fig:DilationComparison}
\end{figure}


Our approach uses a minimum spanning tree and straight polygons as intermediate representations, from which multiple grouping hypotheses are derived by tracing back each polygon vertex to the original point set and extracting disjoint polygons in the trace. Unlike traditional dot grouping algorithms, our approach provides multiple grouping instances. Each dot moves in at least two directions, thus can participate in both foreground and background and provide multiple interpretations. The algorithm is parameter free, simple, and deterministic.
The initial grouping by a minimum spanning tree groups all dots into a single polygon, thus simplifies the subsequent algorithm slightly. However, it brings erroneous grouping especially in thin structures and between two separate clusters of points. More elaborate initial grouping needs to be tried and studied.

A series of polygons derived from the algorithm can be presented in a scale-space manner. When a polygon is split into two, the resulting two polygons can be viewed as having finer scales than their parent. When an outer growing polygon merges multiple polygons into one, the resulting polygon can be viewed as having a coarser scale than the merged polygons. Thus, we can establish easily partial ordering of polygons. How to assign a global scale to the order in a meaningful way requires further investigation.

Both saliency measure of (\ref{eq:convexity}) and matching score of (\ref{eq:matchingScore}) are crude and require more elaborate designs. As we observed in Figure \ref{fig:ShapeDotPatternResults}, the saliency measure often favors rounded shapes over articulated ones. Exclusion of limbs then hampers recognition phase where these parts are visually salient and often serve as distinguish features for effective recognition. The area based matching score also suffers the same shortcoming as limbs do not contribute to the area as much as the body. The shape retrieval experiment conducted for this paper is crude as no geometric transformations are allowed on queries. With improved saliency measure and matching score, the retrieval algorithm should be able to cope with various distortion and a larger scale database of shapes.

\section*{Acknowledgement}
The author thanks Ernest Greene at University of Southern California for providing the shape examples (Figure \ref{fig:DatabaseShapes}). This work was supported by the United States National Science Foundation grants CCF-1117439 and CCF-1421734.

\section*{Appendix}
For completeness, lemmas mentioned in this paper and their proofs are given here.

\begin{lemma}\label{lemma:numEvents}
The number of events (both split and edge) is at most $m-2$ where $m$ is the number of vertices in the initial polygon.
\end{lemma}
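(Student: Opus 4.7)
My plan is a potential function argument on the state of the polygon system as the straight polygon transformation proceeds. At any instant, let $V$ be the total number of vertices summed over all currently-active polygons and let $P$ be the number of such active polygons; set $\phi := V - 2P$. Initially we have a single polygon with $m$ vertices, so $\phi_{0} = m - 2$; at termination every polygon has vanished, so $V = P = 0$ and $\phi_{\text{end}} = 0$. The lemma then reduces to showing that every event decreases $\phi$ by at least $1$.

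To establish this decrement I would case-analyze by event type. An edge event inside a polygon with $k \geq 4$ vertices merges two adjacent vertices into one, so $V$ drops by $1$, $P$ is unchanged, and $\Delta\phi = -1$. An edge event on a triangle is a simultaneous three-way collision: the three vertices move along their angle bisectors and meet at a single point, eliminating the polygon, so $\Delta V = -3$, $\Delta P = -1$, and $\Delta\phi = -1$. A split event turns a $k$-vertex polygon into two children sharing the impact location as a vertex; a careful enumeration shows the total vertex count across the two children equals $k+1$ (the reflex vertex is consumed and two new vertices are created at the common impact point, one in each child, matching Lines 13--16 of Algorithm 2), so $\Delta V = +1$, $\Delta P = +1$, and again $\Delta\phi = -1$. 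Degenerate events, in which several of the above collisions coincide in time, can only accelerate the drop in $\phi$.

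Summing the decrements over all events yields
\[
\text{(number of events)} \;\leq\; \phi_{0} - \phi_{\text{end}} \;=\; m - 2,
\]
which is exactly the claimed bound.

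The main obstacle I anticipate is the vertex accounting for a split event: one must be precise about the fact that the reflex vertex and the impact point are the \emph{same} geometric location but become two distinct combinatorial vertices, one per child polygon, producing a net gain of $+1$ rather than $+2$ in $V$. Once this bookkeeping is pinned down, the potential calculation is immediate and the bound follows in a single line; the only other thing worth noting is that split events always produce two non-degenerate children (each with at least $3$ vertices), so the accounting $\Delta P = +1$ is consistent with the convention of only counting live polygons in $P$.
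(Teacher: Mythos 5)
Your proof is correct, but it is packaged differently from the paper's. The paper defines $f(m)$ as the maximum number of events for an $m$-vertex polygon and argues by induction on $m$: the base case $f(3)=1$, an edge event giving $f(m-1)+1\le m-2$, and a split event giving $f(m_a)+f(m_b)+1\le m-2$ via $m_a+m_b=m+1$. You instead run an amortized analysis with the potential $\phi=V-2P$ and show every event decrements $\phi$ by exactly one. The combinatorial content is identical --- both arguments rest on precisely the same vertex bookkeeping (an edge event loses one vertex; a split event's two children carry $k+1$ vertices in total; a triangle's final edge event is terminal) --- so neither is ``more elementary,'' but your telescoping version has two small advantages. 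First, it is non-recursive, so it reads off the bound in one line once the per-event accounting is done. Second, you do not actually need your claim that $\phi_{\mathrm{end}}=0$: since every live polygon has at least $3$ vertices, $\phi\ge P\ge 0$ at all times, so the event count is at most $\phi_0=m-2$ even if some polygon never vanishes. This covers the outward-expansion setting (where the paper also invokes this lemma, even though some vertices escape to infinity) more transparently than the induction, which implicitly treats each sub-polygon as running to completion. Both your argument and the paper's lean on the same non-degeneracy assumptions (no simultaneous collisions, each split child a genuine polygon with at least three vertices), which the paper states earlier in Section \ref{rep}, so you are at the same level of rigor as the original.
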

\begin{proof}
Let $f(m)$ be the upper bound of the number of events for a polygon with $m$ vertices. We want to show $f(m)\le m-2$. We prove it by induction.
When $m=3$, the only event is of edge type, which collides the three vertices into one. No split event can occur. Thus, $f(3)=1 \le m-2$.
For $m>3$, both split and edge events are possible. If the next event is an edge event, it results in a polygon with $m-1$ vertices. Thus, the number of events is bounded by $f(m-1)+1\le m-2$. If the next event is a split event, it splits the polygon into two. Say the two polygons have $m_a<m$ and $m_b<m$ vertices, respectively. Note that $m_a+m_b = m+1$. Then, $f(m_a) + f(m_b) + 1\le m_a-2 + m_b-2 + 1 = m-2$. Thus, either case, $f(m)\le m-2$.
\end{proof}

\begin{lemma}\label{lemma:numVertices}
The number of vertices in the initial offset polygon derived from a tree of $|V|$ vertices is $2|V|-2+n_L$ where $n_L$ is the number of leaves in the tree.
\end{lemma}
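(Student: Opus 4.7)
The plan is to count polygon vertices locally at each tree vertex according to its degree, and then sum via the handshake identity. First I would describe the initial polygon construction precisely: viewing the spanning tree as an abstract plane embedding, the initial polygon is a thin boundary that walks around $T$ keeping the tree to one side, exactly as depicted in Figure \ref{fig:InitialPolygonTrace}. Under this trace, polygon vertices appear only where the walk negotiates a tree vertex; the long straight sides of the polygon run parallel to the tree edges and contribute no new vertices.

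Next I would establish the key local count. (a) Each leaf $v$ contributes exactly $2$ polygon vertices, since the trace must perform a U-turn at the tip of the edge incident to $v$, producing one corner on each side. (b) Each internal vertex $v$ of degree $d\ge 2$ contributes exactly $d$ polygon vertices, one for each consecutive pair of edges in the cyclic order around $v$, since the trace generates a new corner whenever it switches from one side of an incoming edge to the adjacent side of the next edge. Both statements follow by inspecting a small disk around $v$ and observing that the thin trace enters and exits along each incident edge exactly once; this is essentially a picture argument but is the one step that requires care, especially at high-degree vertices.

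Finally I would sum the local contributions. Using $\sum_{v\in V}\deg(v) = 2|E| = 2(|V|-1)$ together with $\sum_{v\text{ leaf}}\deg(v) = n_L$, I obtain
\begin{equation*}
2 n_L + \sum_{v\text{ internal}} \deg(v) \;=\; 2 n_L + \bigl(2(|V|-1) - n_L\bigr) \;=\; 2|V| - 2 + n_L,
\end{equation*}
as claimed. As sanity checks, a path tree has $n_L = 2$ and the formula returns $2|V|$, while a star tree has $n_L = |V|-1$ and the formula returns $3|V|-3$, matching the two extreme cases noted in the text. The main obstacle is the precise description of the tracing near a vertex of arbitrary degree; once that local picture is pinned down, the lemma follows from the handshake identity in a single line.
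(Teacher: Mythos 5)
Your proposal is correct and follows essentially the same route as the paper: the same local count (each leaf contributes $2$ polygon vertices, each internal vertex contributes its degree) followed by the handshake identity $\sum_v \deg(v)=2|E|=2(|V|-1)$. Your write-up is in fact slightly more careful than the paper's, since you separate the leaf and internal sums explicitly and flag the local tracing picture as the step needing justification, but there is no substantive difference in the argument.
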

\begin{proof}
At a non-leaf tree node, the number of polygon vertices there is the degree of the tree node. At a leaf tree node, the number of polygon vertices there is 2. Thus,
\begin{equation}
\label{eq:vertices_formula}
m=\sum_{i\in E} d_i + n_L = 2|E|+n_L = 2|V|-2+n_L
\end{equation}
where $d_i$ is the degree of the $i$th vertex. We used the facts that $\sum_{i\in E} d_i=2|E|$ and $|E|=|V|-1$.
\end{proof}

\begin{corollary}\label{corollary:numVertices}
The number of vertices in the initial offset polygon is $\left[2|V|, 3|V|-3\right]$. The lower bound of $2|V|$ occurs when the tree is a path, and the upper bound of $3|V|-3$ occurs when the tree is a star.
\end{corollary}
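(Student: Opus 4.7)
The plan is to apply Lemma \ref{lemma:numVertices}, which reduces the corollary to bounding the number of leaves $n_L$ in a tree on $|V|$ vertices, and to verify that the two extremal tree shapes (path and star) attain these bounds.

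First I would observe that any tree with $|V|\ge 2$ has at least two leaves, so $n_L \ge 2$. Substituting into $m = 2|V|-2+n_L$ gives $m\ge 2|V|$, and this lower bound is realized by a path tree since a path has exactly two endpoints and hence exactly two leaves. To justify the claim $n_L\ge 2$ in a line, I would note that in a finite tree any longest simple path must have both endpoints of degree one, because otherwise the path could be extended, which contradicts its maximality.

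Next I would argue the upper bound. A star with one central vertex adjacent to all the others has $n_L = |V|-1$, giving $m = 2|V|-2+(|V|-1) = 3|V|-3$. To show this is the maximum, note that a tree must have at least one non-leaf whenever $|V|\ge 2$, because any edge has two endpoints and at least one of them must connect to the remainder of the tree (indeed, if every vertex were a leaf then $|E| = |V|/2$, contradicting $|E| = |V|-1$ for $|V|\ge 3$). Hence $n_L\le |V|-1$, and substituting into the formula from Lemma \ref{lemma:numVertices} yields $m\le 3|V|-3$.

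The argument is essentially bookkeeping once Lemma \ref{lemma:numVertices} is in hand; the main (minor) obstacle is simply stating the leaf bounds $2\le n_L\le |V|-1$ cleanly and exhibiting the two extremal trees as witnesses of tightness.
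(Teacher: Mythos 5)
Your proposal is correct and follows essentially the same route as the paper: apply Lemma~\ref{lemma:numVertices} and bound the number of leaves by $2\le n_L\le |V|-1$, with the path and star as the extremal witnesses. You merely supply the (easy) justifications for the leaf bounds that the paper leaves implicit; the only nitpick is that $n_L\le|V|-1$ requires $|V|\ge 3$ (a two-vertex tree has two leaves), which your parenthetical already acknowledges and which is harmless in this setting.
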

\begin{proof}
The number of leaves in a tree is between 2 when the tree is a path and $|V|-1$ when the tree is a star. Using them in (\ref{eq:vertices_formula}) gives the bound.
\end{proof}

\bibliography{all}
\bibliographystyle{splncs}
\end{document}